\newtheorem{corollary}{Corollary}
\newenvironment{proof}{{\it Proof.}}{\hfill $\blacksquare$\par}
\journal{Neurocomputing}
\begin{document}
\begin{sloppypar}

\begin{frontmatter}

\title{Zero-Shot Neural Architecture Search with Weighted Response Correlation}

\author[label1]{Kun Jing\corref{cor1}}
\ead{jingkun@ahu.edu.cn}
\author[label2]{Luoyu Chen}
\author[label3]{Jungang Xu\corref{cor1}}
\ead{xujg@ucas.ac.cn}
\author[label1]{Jianwei Tai}
\author[label4]{Yiyu Wang}
\author[label5]{Shuaimin Li}
\cortext[cor1]{Corresponding authors.}
\affiliation[label1]{organization={School of Internet, Anhui University},
            city={Hefei},
            country={China}}
\affiliation[label2]{organization={Alibaba Group Holding Limited},
            city={Hangzhou},
            country={China}}
\affiliation[label3]{organization={School of Computer Science and Technology, University of Chinese Academy of Sciences},
            city={Beijing},
            country={China}}
\affiliation[label4]{organization={Alibaba International Digital Commerce},
            city={Hangzhou},
            country={China}}
\affiliation[label5]{organization={Shenzhen Key Laboratory for High Performance Data Mining, Shenzhen Institutes of Advanced Technology, Chinese Academy of Sciences},
            city={Shenzhen},
            country={China}}

\begin{abstract}
Neural architecture search (NAS) is a promising approach for automatically designing neural network architectures. However, the architecture estimation of NAS is computationally expensive and time-consuming because of training multiple architectures from scratch. Although existing zero-shot NAS methods use training-free proxies to accelerate the architecture estimation, their effectiveness, stability, and generality are still lacking. We present a novel training-free estimation proxy called weighted response correlation (WRCor). WRCor utilizes correlation coefficient matrices of responses across different input samples to calculate the proxy scores of estimated architectures, which can measure their expressivity and generalizability. Experimental results on proxy evaluation demonstrate that WRCor and its voting proxies are more efficient estimation strategies than existing proxies. We also apply them with different search strategies in architecture search. Experimental results on architecture search show that our zero-shot NAS algorithm outperforms most existing NAS algorithms in different search spaces. Our NAS algorithm can discover an architecture with a 22.1\% test error on the ImageNet-1k dataset within 4 GPU hours. All codes are publicly available at https://github.com/kunjing96/ZSNAS-WRCor.git.
\end{abstract}

\begin{keyword}
Efficient architecture estimation strategy \sep training-free proxy \sep weighted response correlation \sep zero-shot neural architecture search
\end{keyword}

\end{frontmatter}


\section{Introduction}

The success of deep learning in various fields \cite{NASSurvey}, especially computer vision, causes a surge in demand for designing neural architectures. Designing neural architectures manually requires extensive expertise and time investment. Neural architecture search (NAS) \cite{NASSurvey, EvoAAE, MODEOCNN, AER, MR-DARTS, SlimDARTS, DESEvo} offers a potential solution for automatically designing neural architectures across various domains, eliminating the need for human involvement.

The architecture estimation strategy is one of the core components of NAS. Early NAS researchers \cite{NAS, NASNet, AmoebaNet} employ standard training and approximate training proxy methods for architecture estimation called multi-shot NAS. However, these methods usually require substantial computational resources and time to train numerous architectures from scratch.
For efficiency, few-shot NAS methods \cite{PNAS, GMAENAS, GATES} utilize architecture performance predictors implemented by neural networks to predict the accuracies or relative performance values of neural architectures. For reliable predictors, these methods have to train a sufficient number of neural architectures.
Besides, one-shot NAS methods \cite{ENAS, DARTS, MR-DARTS, SlimDARTS} propose a parameter-sharing technology, which maintains a set of network parameters of an over-parameterized supernet during the search process and estimates sub-networks using the supernet parameters. The training of one-shot supernets is challenging \cite{AER} due to the intricate coupling of subnets, compromising quality and robustness.
In summary, the training is still inevitable for architecture estimation in few-shot and one-shot NAS.

Recently, zero-shot NAS methods \cite{NASWOT, EPENAS, ZeroShotNAS, ZeroCost, ZenNAS, ZiCo} propose utilizing several matrix calculations or model inferences to estimate neural architectures without training within seconds. They are known as training-free estimation proxies.
Although zero-shot NAS methods are more efficient, existing training-free estimation proxies fail to consistently and robustly outperform two naive proxies \cite{ZeroShotNAS, DeeperZeroCost, ZiCo}, i.e., the number of parameters (Params) and floating point operations (FLOPs). Some proxies have specific requirements \cite{NASWOT, TENAS, ZenNAS} for neural architectures. These undermine the effectiveness, stability, and generality of zero-shot NAS methods.

According to zero-shot NAS methods \cite{NASWOT, EPENAS, ZeroShotNAS, ZeroCost, ZenNAS, ZiCo} and potential proxies \cite{SNIP, GraSP, SynFlow, Fisher, LyrDynIsometry} from other fields, the performances of neural architectures mainly depend on their expressivity and generalizability. 
Based on this conclusion, we propose a novel training-free estimation proxy called weighted response correlation (WRCor), as shwon in Figure~\ref{fig: OVERVIER}, which can compute the response (activation and gradient) correlation coefficient matrices of different inputs throughout different layers of the network and aggregate them with layer-wise weights to calculate the proxy score. We clarify that a higher proxy score means the responses of different inputs are more linearly independent in a given network, which is positively related to its better performance. For further improvement, we propose two voting proxies, SPW and SJW, using the majority voting rule.
The experimental results on proxy evaluation across three datasets and two NAS benchmarks verify the superiority of our training-free estimation proxies over existing proxies. The experimental results on architecture search across three search spaces demonstrate that our zero-shot NAS algorithms with various search strategies and our training-free estimation proxies outperform existing NAS algorithms for image recognition.
\begin{figure*}[t]
  \centering
  \includegraphics[width=\linewidth]{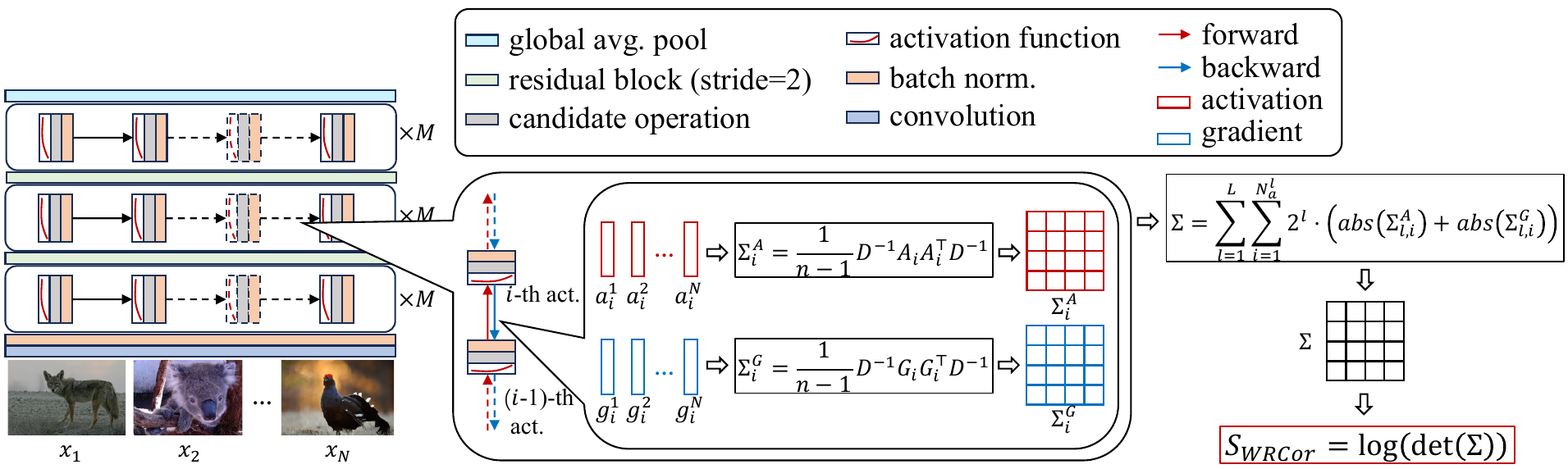}
  \caption{The overview of our proposed training-free proxy.}
  \label{fig: OVERVIER}
\end{figure*}

Our contributions are summarized as follows.
\begin{itemize}
\item We propose an efficient, robust, and generic training-free estimation proxy (WRCor) that offers a unified form for measuring both layer-wise expressivity and generalizability of neural architecture.
\item We propose two majority voting proxies (SPW and SJW) to improve the performance and stability of architecture estimation.
\item We use our training-free estimation proxies to enhance various search strategies for more efficient NAS.
\item Our zero-shot NAS algorithm RE-SJW achieves a highly competitive result of 22.1\% top-1 test error on the ImageNet-1k dataset within just four GPU hours.
\end{itemize}

\section{Related Work}

Training-free proxy is the key component of zero-shot NAS. The goal of training-free proxies is to estimate the performance scores of neural architectures without training.
NASWOT \cite{NASWOT} uses the number of linear regions of the input space to measure the expressivity of neural architectures.
However, NASWOT ignore the trainability of architectures, which imply how effective a network can be optimized via gradient descent. A network can achieve high performance only if the function it can represent is complex enough and at the same time, it can be effectively trained by gradient descent.
In addition to expressivity, TE-NAS \cite{TENAS} propose to analyze the spectrum of the neural tangent kernel for measuring the trainability of architectures.
According to this assumption that an untrained network with low correlation between different data points, where data points of the same category are closer to each other, can easily learn to distinguish the two data points during training,
EPE-NAS \cite{EPENAS} also proposes an improvement of Jacobian correlation (JacCor) \cite{NASWOT} called EPE, which involves both intra- and inter-class correlation.
Beside, many saliency metrics of pruning-at-initialization are extended for zero-shot NAS \cite{ZeroCost}, such as SNIP \cite{SNIP}, GraSP \cite{GraSP}, SynFlow \cite{SynFlow}, and Fisher \cite{Fisher, BlockSwap}. These proxies rely on the gradient w.r.t the parameters of neural networks, which are different approximations of the Taylor expansions of neural networks.
Zen-score (Zen) \cite{ZenNAS} approximates $\Phi$-score (Phi) against model-collapse in very deep networks and can directly assert the model complexity.
Although these proxies perform well, their approximate expressions limit their effectiveness.
None of these zero-shot proxies can actually work consistently better than a naive proxy, i.e., Params and FLOPs. Based on theoretical analysis,
ZiCo \cite{ZiCo} reveals that the network with a high training convergence speed and generalization capacity should have high absolute mean values and low standard deviation values for the gradient w.r.t the parameters across different samples.

There are also some potential proxies from other fields for training-free architecture estimation.
Euclidean norm of parameters (PNorm) and Euclidean norm of parameter gradients (PGNorm) are the simplest saliency metrics for parameter pruning. We use the sum of Euclidean norms of all parameters or parameter gradients for architecture estimation.
Barlow Twins loss (BT) \cite{BarlowTwins} can measure the cross-correlation matrix between the outputs of two identical networks fed with distorted versions of a sample. We use Barlow Twins loss multiplied by -1 as a training-free proxy.
Bundle entropy (BndlEnt) \cite{BundleEntropy} is proposed to measure the sample conflict in each layer. We use the average value of bundle entropies for all network layers as a training-free proxy.
Layer-wise dynamical isometry (LyrDynIso) \cite{LyrDynIsometry} is defined as the singular values of the Jacobian matrix $\bm{J}^{l-1,l}\in\mathbb{R}^{d_l \times d_{l-1}}$ of layer $l$ are concentrated near 1 for all layers, which guarantees faithful signal propagation in the network.
We also use a more efficient proxy DynIso that takes the Jacobian matrix $\bm{J}^{0, L}$ of the network into account instead of the layer-wise Jacobian matrix.
The Jacobian matrix can measure the change in network output when the input changes. We use $l_1$-norm of the Jacobian matrix as a training-free proxy NetSens.
Although these potential proxies performs well in some scenes, their effectiveness in all situations are not experimentally verified or theoretically analyzed.

These above training-free proxies can significantly accelerate the NAS process because they can estimate the performances of architectures without training. Compared with conventional NAS methods, zero-shot NAS is more efficient NAS method. However, there are three disadvantages for these above training-free proxies: 1) they fail to consistently and robustly outperform two naive proxies \cite{ZeroShotNAS, DeeperZeroCost, ZiCo}, i.e., Params and FLOPs; 2) they cannot consistently and robustly outperform other existing proxies in all situations; 3) some proxies have specific requirements for neural architectures, e.g., NASWOT \cite{NASWOT} and TENAS \cite{TENAS} can only estimate the performance of CNN architecture and ZenNAS \cite{ZenNAS} requires a CNN architecture with ReLU as the activation function.
These undermine the effectiveness, stability, and generality of existing zero-shot NAS methods with training-free proxies. To solve these limitations, we analyze the behavior of neural networks and propose an efficient, robust, and generic training-free estimation proxy called WRCor that offers a unified form for measuring both layer-wise expressivity and generalizability.

\section{Methodology}

For clarity and convenience, we define \emph{activations} as the hidden feature maps and \emph{gradients} as the gradients w.r.t the hidden feature maps. These two are collectively referred to as \emph{responses}.

\subsection{Expressivity}

The \emph{expressivity} refers to the ability of neural architecture to distinguish different inputs. Since neural networks are adept at handling problems that are even more linearly separable, different samples with linearly independent activations can be distinguished more easily, i.e., a neural architecture has greater expressivity. The correlation coefficient, which measures the degree of linear correlation, serves as a suitable metric for the correlation of activations. A lower correlation coefficient of activations across different samples indicates their lower linear correlation.
Inspired by this, we propose utilizing the correlation coefficient of activations (correlation of activations, ACor) across different inputs for evaluating the expressivity of neural architectures.
Specifically, given a neural network with $N_a$ non-linear activation units and a randomly sampled minibatch $\bm{X}=(\bm{x}_1, \bm{x}_2, \cdots, \bm{x}_N)^\top$ with batch size $N$, we collect the $i$-th flattened activation $\bm{a}^n_i\in\mathbb{R}^{d_i}$ of the $n$-th sample by forward inference, where $d_i$ represents the dimension of the flattened activations. We compute the correlation coefficient matrices $\bm{\Sigma}^A_i\in\mathbb{R}^{N \times N}$ for activations $\bm{A}_i=(\bm{a}_i^1, \bm{a}_i^2, \cdots, \bm{a}_i^N)^\top$.
The main-diagonal elements of the correlation coefficient matrix $\bm{\Sigma}^A_i$ are always 1. Based on the above conclusion, networks with better expressivity have a lower correlation of the activations across different inputs, i.e., the smaller non-main-diagonal elements in the correlation coefficient matrix $\bm{\Sigma}^A_i$.

\subsection{Generalizability}

The \emph{generalizability} refers to the ability to perform well on unseen data after being trained via gradient descent,
which is equally important as expressivity.
Intuitively, it is believed that gradients that exhibit greater linear dependence facilitate convergence \cite{TFGB} because when gradients are more linearly dependent, they point in similar directions, potentially leading to a more straightforward descent towards a minimum in the loss landscape. However, if the gradients are consistently aligned across different inputs, this could potentially lead to issues such as poor generalization or convergence to suboptimal minima. In essence, independent gradient directions might encourage to exploration of a broader range of solutions during training, thus enhancing its generalizability.
Similarly, we also propose utilizing the correlation coefficient of gradients (correlation of gradients, GCor) across different inputs for evaluating the generalizability of neural architectures.
By collecting the $i$-th flattened gradients $\bm{g}^n_i$ of the $n$-th sample by backward propagation, we compute the correlation coefficient matrices $\bm{\Sigma}^G_i$ for gradients $\bm{G}_i=(\bm{g}_i^1, \bm{g}_i^2, \cdots, \bm{g}_i^N)^\top$.
The main-diagonal elements of the correlation coefficient matrix $\bm{\Sigma}^G_i$ are always 1. Based on the above conclusion, networks with better generalizability have a lower correlation of the gradients across different inputs, i.e., smaller non-main-diagonal elements in the correlation coefficient matrix $\bm{\Sigma}^G_i$.

\subsection{Training-Free Proxies}

Our training-free proxies rely on the correlation of responses.
Given that informative responses tend to yield superior proxies, we consider three candidates: (a) post-activation responses; (b) pre-normalization responses; and (c) pre-activation or post-normalization responses.
Option (a) is inappropriate because the non-linear activation units of untrained networks are often saturated, resulting in information loss. For instance, pre-activation activations in ReLU networks contain a large number of negative values, which become zeros after passing through ReLU activation units, leading to the loss of information related to negative values. Option (b) is biased due to the lack of normalization. Therefore, we propose using option (c), which can provide unbiased and the most informative responses. We compute the correlation coefficient matrices $\bm{\Sigma}^A_i$ and $\bm{\Sigma}^G_i$ by collecting the flattened pre-activation activations and gradients of all samples.

\subsubsection{Correlation of Responses}

For architecture estimation, we map these correlation coefficient matrices of responses (including activations and gradients) to a scalar score.

\begin{corollary}
  For randomly sampled inputs, the correlation coefficients between their responses through a randomly initialized neural network are almost equal. \label{corollary1}
\end{corollary}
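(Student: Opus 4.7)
The plan is to exploit the exchangeability induced by (i) i.i.d.\ random inputs and (ii) a weight distribution that is invariant under permutation of the input indices, and then combine this with a high-dimensional concentration argument applied to the correlation coefficient.

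First, I would fix a layer $i$ and consider the pre-activation responses $\bm{a}_i^1,\ldots,\bm{a}_i^N$ viewed as a random field jointly over the input minibatch and the random weights. Because the inputs are drawn i.i.d.\ and the weights are independent of the inputs, the joint law of any pair $(\bm{a}_i^m,\bm{a}_i^n)$ with $m\neq n$ is identical: permuting sample indices does not change the distribution of the collected activations. Consequently, the pairwise Pearson correlation coefficient $\rho_{mn}^A := (\bm{\Sigma}_i^A)_{mn}$, which is a fixed deterministic function of that pair, has a distribution that does not depend on $(m,n)$. In particular all off-diagonal entries share a common mean $\mu_i^A := \mathbb{E}[\rho_{mn}^A]$. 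The same exchangeability holds for the gradients $\bm{g}_i^n$, giving a common mean $\mu_i^G$ for the entries of $\bm{\Sigma}_i^G$.

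Next I would show the off-diagonal entries concentrate around this common mean. Each $\rho_{mn}^A$ is a normalized inner product of two high-dimensional vectors of dimension $d_i$, whose coordinates are sums or products of many independent weight contributions. Under standard initialization schemes (e.g.\ zero-mean Gaussian weights with variance scaled to preserve signal magnitude) and mild regularity on the nonlinearity, one can invoke either (a) a CLT/propagation-of-chaos style argument on each coordinate as width grows, giving $\mathrm{Var}(\rho_{mn}^A)=O(1/d_i)$, or (b) a Lipschitz-concentration inequality on the unit sphere, since $\rho_{mn}^A$ is a Lipschitz function of the normalized responses. A union bound over the $O(N^2)$ pairs then yields $\max_{m\neq n}|\rho_{mn}^A-\mu_i^A|\to 0$ as $d_i\to\infty$, which is exactly the ``almost equal'' assertion. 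The gradient case follows identically after noting that backward-propagated gradients at a random initialization are also high-dimensional vectors built from independent weight matrices.

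The main obstacle will be the concentration step: giving it genuine teeth requires an explicit assumption on the initialization (symmetric, zero-mean, variance-preserving) and on the activation (e.g.\ Lipschitz, or the specific ReLU-family case analyzed in mean-field NN theory). I would likely state the corollary under such standing hypotheses and then either cite known propagation results for random neural networks or supply a short proof for the specific two-layer/shallow case and argue that the same bound propagates layer-wise. A secondary subtlety is the correlation of responses across layers through the shared weights, but since the claim is only about within-layer pairwise correlations for a fixed $i$, this coupling does not enter the statement; it would only matter if one wanted joint concentration, which is beyond what the corollary asserts.
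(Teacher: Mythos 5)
Your proposal is correct in outline and is strictly more ambitious than what the paper actually does. The paper's own proof is a two-line heuristic that coincides with your first step: i.i.d.\ parameters plus inputs drawn from a common distribution make the per-layer response distributions identical across samples, so the correlation ``depends just on the architecture'' and the coefficients are declared almost equal. That argument only establishes what you call exchangeability --- every off-diagonal entry of $\bm{\Sigma}^A_i$ (resp.\ $\bm{\Sigma}^G_i$) has the same \emph{distribution}, hence the same mean --- and it silently conflates equality in law with near-equality of the realized entries. Your second step, the concentration argument (variance $O(1/d_i)$ via CLT/propagation-of-chaos or Lipschitz concentration on the sphere, plus a union bound over the $O(N^2)$ pairs), is exactly the missing ingredient that turns ``identically distributed'' into ``almost equal,'' and it is absent from the paper; in exchange it forces you to add hypotheses the paper never states (zero-mean variance-preserving initialization, Lipschitz activations, large width). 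One caveat worth flagging if you flesh this out: the Pearson coefficient here is computed over the $d_i$ coordinates of a \emph{flattened convolutional} feature map, whose entries are strongly spatially correlated for natural-image inputs, so the effective dimension governing the variance bound can be much smaller than $d_i$; the gradient case is easier than it may look, since Pearson correlation is scale-invariant and the output-dependent scale factors the paper worries about cancel. Net assessment: your route proves a sharper, asymptotic version of the corollary under explicit assumptions, while the paper settles for a qualitative plausibility argument sufficient to motivate the rank-one-plus-diagonal approximation of the aggregated matrix used afterwards.
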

\begin{proof}
  Due to the i.i.d. parameters of a randomly initialized neural network and different inputs from the same distribution, the distributions of activations in each layer are close. Besides, the distributions of gradients in each layer are also close without considering different scale factors from the gradients w.r.t the final network outputs. Given a randomly initialized neural network and sufficient inputs, the correlation of responses depends just on neural architectures. Therefore, the correlation coefficients between responses are almost equal.
\end{proof}

According to the Corollary~\ref{corollary1}, we aggregate all correlation coefficient matrices of pre-activation activations or pre-activation gradients by
\begin{equation}
  \bm{\Sigma}^{A/G}=\sum_{i=1}^{N_a}\mathrm{abs}(\bm{\Sigma}^{A/G}_i) \approx
  \begin{bmatrix}
    N_a & N_ax & \cdots & N_ax \\
    N_ax & N_a & \cdots & N_ax \\
    \vdots & \vdots & \ddots & \vdots \\
    N_ax & N_ax & \cdots & N_a \\
  \end{bmatrix}
\end{equation}
where $x\in[0,1]$. The aggregated correlation coefficient matrix can be regarded as the non-normalized expectation of the correlation coefficient matrices.

\begin{corollary}
  For a $N$-order symmetric matrix ($N\geq2$) where all main-diagonal elements are 1 and all non-main-diagonal elements are $x\in[0,1]$, its determinant value equals $[1+(N-1)x](1-x)^{N-1}$. It monotonically decreases w.r.t $x$. It attains the maximum value of 1 iff $x=0$ and the minimum value of 0 iff $x=1$.\label{corollary2}
\end{corollary}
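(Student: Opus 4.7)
The plan is to decompose the matrix into a form whose spectrum is transparent, read off the determinant as a product of eigenvalues, and then analyze the resulting polynomial in $x$ directly. Let $M$ denote the matrix in question. First I would write
\begin{equation}
M = (1-x)\bm{I}_N + x\bm{J}_N,
\end{equation}
where $\bm{I}_N$ is the identity and $\bm{J}_N$ is the all-ones matrix. Since $\bm{J}_N$ has rank one, its eigenvalues are $N$ (with eigenvector $(1,\ldots,1)^{\top}$) and $0$ (with multiplicity $N-1$), so the eigenvalues of $M$ are $1+(N-1)x$ and $1-x$ (with multiplicity $N-1$). The determinant is then the product of the eigenvalues, which yields $\det(M) = [1+(N-1)x](1-x)^{N-1}$ immediately. (As a sanity check I would also note that the same identity can be obtained by the elementary row/column operation of subtracting the first row from every other row and then adding all columns into the first, which turns $M$ into a triangular matrix with diagonal entries $1+(N-1)x, 1-x, \ldots, 1-x$.)

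Next, for the monotonicity claim, I would let $f(x)=[1+(N-1)x](1-x)^{N-1}$ and compute
\begin{equation}
f'(x) = (N-1)(1-x)^{N-1} - (N-1)[1+(N-1)x](1-x)^{N-2} = -N(N-1)\,x\,(1-x)^{N-2}.
\end{equation}
For $x\in[0,1]$ and $N\geq 2$, every factor is non-negative, so $f'(x)\leq 0$ on $[0,1]$ and $f'(x)<0$ on $(0,1)$, hence $f$ is strictly decreasing on the closed interval. Finally, substituting the endpoints gives $f(0)=1$ and $f(1)=0$, which together with strict monotonicity provides the claimed maximum and minimum characterizations (the ``iff'' part being immediate from strict monotonicity).

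There is no substantive obstacle in this proof: the only minor subtlety is correctly handling the $N=2$ case in the derivative expression (where the factor $(1-x)^{N-2}$ becomes $(1-x)^0=1$), which still yields $f'(x)=-2x\leq 0$ with equality only at $x=0$, so strict monotonicity on $[0,1]$ and the endpoint values remain valid. Thus the entire corollary reduces to the rank-one eigen-decomposition of $\bm{J}_N$ plus a one-line derivative computation.
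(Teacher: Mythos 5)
Your proof is correct, and for the determinant identity it takes a different route from the paper. You write the matrix as $(1-x)\bm{I}_N + x\bm{J}_N$ and read the determinant off the spectrum of the rank-one all-ones matrix $\bm{J}_N$ (eigenvalues $1+(N-1)x$ once and $1-x$ with multiplicity $N-1$), whereas the paper obtains $[1+(N-1)x](1-x)^{N-1}$ purely by elementary determinant manipulations: extracting the factor $1+(N-1)x$ and then reducing to an upper-triangular matrix with diagonal $1,1-x,\ldots,1-x$ — essentially the row/column-operation argument you mention only as a sanity check. The spectral route buys a bit more (it exhibits the full eigenvalue structure, hence e.g. positive semidefiniteness of the matrix for $x\in[0,1]$), while the paper's computation is more elementary and self-contained. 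For the monotonicity and the endpoint/extremum claims, your argument coincides with the paper's: the same derivative $-N(N-1)x(1-x)^{N-2}$, nonpositive on $[0,1]$, with $f(0)=1$ and $f(1)=0$; in fact your explicit remark that $f'<0$ on the open interval $(0,1)$ (and the $N=2$ edge case) justifies the strict decrease slightly more carefully than the paper's statement that the derivative "remains non-positive."
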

\begin{proof}
We compute the determinant value of the matrix by
\begin{eqnarray}
  \det\left(\begin{bmatrix}
    1 & x & \cdots & x \\
    x & 1 & \cdots & x \\
    \vdots & \vdots & \ddots & \vdots \\
    x & x & \cdots & 1 \\
  \end{bmatrix}\right)
  =& [1+(N-1)x]
  \det\left(\begin{bmatrix}
    1 & 1 & \cdots & 1 \\
    x & 1 & \cdots & x \\
    \vdots & \vdots & \ddots & \vdots \\
    x & x & \cdots & 1 \\
  \end{bmatrix}\right)\\
  =& [1+(N-1)x]
  \det\left(\begin{bmatrix}
    1 & 1 & \cdots & 1 \\
    0 & 1-x & \cdots & 0 \\
    \vdots & \vdots & \ddots & \vdots \\
    0 & 0 & \cdots & 1-x \\
  \end{bmatrix}\right)\\
  =& [1+(N-1)x](1-x)^{N-1}\label{formula:DET}
\end{eqnarray}
The first-order derivative
\begin{equation}
  \frac{\partial [1+(N-1)x](1-x)^{N-1}}{\partial x} = -N(N-1)(1-x)^{N-2}x
\end{equation}
of Formula~\ref{formula:DET} remains non-positive on $x\in[0,1]$. Therefore, Formula~\ref{formula:DET} exhibits a strict decrease on $x\in[0,1]$ w.r.t $x$, attaining its maximum value of 1 at $x=0$ and its minimum value of 0 at $x=1$.
\end{proof}

According to the Corollary~\ref{corollary2}, we map the aggregated correlation coefficient matrix $\bm{\Sigma}^{A/G}$ to a scalar score by
\begin{eqnarray}
  S_{ACor/GCor} = \log(\det(\bm{\Sigma}^{A/G}))
  \approx& N_a^N
  \det\left(\begin{bmatrix}
    1 & x & \cdots & x \\
    x & 1 & \cdots & x \\
    \vdots & \vdots & \ddots & \vdots \\
    x & x & \cdots & 1 \\
  \end{bmatrix}\right)\\
  =& N_a^N [1+(N-1)x](1-x)^{N-1}
\end{eqnarray}
where the logarithm function can avoid the overflow caused by the continuous multiplication of large numbers.

After that, a higher score implies that the activations or gradients of different inputs have a lower linear correlation because the non-main-diagonal elements are close to 0. Therefore, a higher score corresponds to a better architecture.

To improve our proxy, we use the correlation of pre-activation responses as our training-free proxy by computing
\begin{eqnarray}
  S_{RCor} = \log(\det(\bm{\Sigma})),\\
  \bm{\Sigma}=\sum_{i=1}^{N_a}\mathrm{abs}(\bm{\Sigma}^A_i)+\mathrm{abs}(\bm{\Sigma}^G_i)
  \label{formula:ACTGRADCOR}
\end{eqnarray}
Similarly, according to the Corollary~\ref{corollary1} and \ref{corollary2}, a high RCor score also indicates a superior architecture.

Our training-free proxies are significantly more flexible and general than NASWOT \cite{NASWOT}, TENAS \cite{TENAS}, and ZenNAS \cite{ZenNAS} because they apply to any neural architecture. Besides, we compute the score using matrix aggregation rather than accumulating scores of layers, which can avoid a large number of calculations for matrix determinants.

\subsubsection{Weighted Correlation of Responses}\label{section:WEIGHTED}

Furthermore, our observations indicate that state-of-the-art networks often fail to distinguish different samples based on bottom-level features and rely on top-level features for clear discrimination. These networks usually capture common bottom-level features and independent top-level features. For superior neural architectures, the correlation of top-level responses is lower than that of bottom-level responses across different inputs. Based on our observation and hypothesis, we allow non-strict linear independence in bottom-level responses and focus on the correlation of top-level responses. Specifically, when aggregating the correlation coefficient matrices, we assign higher weights to the correlation coefficient matrices of top-level responses than those of bottom-level responses for weighting them. 
According to the above conclusion, we expect the importance of the responses to significantly increase with the number of layers. Therefore, we choose to exponentially weight these matrices.  Compared to linear weighting and quadratic weighting, it places more emphasis on top-level responses. Their importances or weights increase geometrically across successive layers.

Given an $L$-layer neural network, where the $l$-th layer has $N^l_a$ non-linear activation units, we summarize these matrices using element-wise weighted summation. We assign exponentially increasing weights from the bottom to the top, i.e.,
\begin{eqnarray}
  S_{WRCor} = \log(\det(\bm{\Sigma}))\\
  \bm{\Sigma}=\sum_{l=1}^L\sum_{i=1}^{N^l_a} 2^l \cdot (\mathrm{abs}(\bm{\Sigma}^A_{l,i})+\mathrm{abs}(\bm{\Sigma}^G_{l,i}))
  \label{formula:WEIGHTACTGRADCOR}
\end{eqnarray}
This ensures that the top layers have a greater influence in evaluating deeper networks, while all layers contribute to evaluating shallower ones, which aligns with our initial expectations. According to the Corollary~\ref{corollary1} and \ref{corollary2}, it can be proven that a higher WRCor score closely corresponds to a superior architecture.

In summary, compared to existing training-free proxies, our WRCor proxy has three differences and advantages: 1) ours can consider both the expressivity and generalizability; 2) ours can simplify the calculation process by unifying score forms and matrix aggregation; 3) ours can consider correlation differences in different layers using the layer-wise score with exponentially increasing weights from the bottom to the top.

\subsubsection{Proxy Voting}\label{section:voting}

Proxy voting refers to the process where multiple proxies collaborate using predefined aggregation methods to estimate the performance of neural architectures.
Given that no single proxy can consistently and robustly outperform all other proxies across all scenarios \cite{PPP}, proxy voting for architecture estimation is standard practice \cite{ZeroCost, PPP}.
To improve the performance and stability of proxies, we propose a voting proxy, named SJW, which employs a majority voting rule based on three highly competitive proxies: SynFlow, JacCor, and WRCor.
For comparison, we also propose two additional voting proxies. ZeroCost \cite{ZeroCost} is a voting proxy of SynFlow, JacCor, and SNIP; SPW is a voting proxy of SynFlow, PNorm, and WRCor, where PNorm is the most cost-effective proxy except for Params and FLOPs.
We do not use NASWOT and ZiCo as the voting experts because they have similar measures of the expressivity and generalizability of neural architectures. 

\subsection{Search Strategies}

To search for the optimal architectures, we adopt three search strategies: random search (R), reinforcement learning (RL) \cite{NAS}, and regularized evolution (RE) \cite{AmoebaNet}. 
Random search directly scores a set of $N$ randomly sampled architectures.
As described in Algorithm~\ref{alg: RL}, our algorithm based on RL scores a sampled architecture $\alpha$ by the controller $\pi^\theta$. Then it uses the normalized score as the controller reward $R$ and updates the parameters $\theta$ of the controller by policy gradient $R \frac{\partial\log p(\alpha|\pi^\theta)}{\partial\theta}$ with extra momentum. The process iterates until $N$ architectures are explored.
As described in Algorithm~\ref{alg: RE}, our algorithm based on RE scores a set of $P$ randomly sampled architectures as the initial population. Then it mutates a parent architecture obtained through the Tournament selection method with $S$ competitors for the next generation and scores the next generation. The oldest individual in the population is eliminated. The process iterates until $N$ architectures are explored.
For each search strategy, we always select the architecture with the highest score from $N$ explored architectures as the final discovered architecture.
\begin{algorithm}[!htb]
  \caption{Reinforcement Learning}
  \label{alg: RL}
\begin{algorithmic}
  \STATE $history \leftarrow \varnothing$
  \WHILE{$|history|<N$}
  \STATE $model.arch \leftarrow$ $controller$.GenerateArchitecture()
  \STATE $model.score \leftarrow$ GetScore($model.arch$)
  \STATE add $model$ to $history$
  \STATE $controller.reward \leftarrow$ Normalize($model.score$)
  \STATE $controller$.Update($controller.reward$)
  \ENDWHILE
  \STATE \textbf{return} highest-accuracy model in $history$
\end{algorithmic}
\end{algorithm}
\begin{algorithm}[!htb]
  \caption{Regularized Evolution}
  \label{alg: RE}
\begin{algorithmic}
  \STATE $population \leftarrow$ empty queue
  \STATE $history \leftarrow \varnothing$
  \WHILE{$|population|<P$}
  \STATE $model.arch \leftarrow$ RandomArchitecture()
  \STATE $model.score \leftarrow$ GetScore($model.arch$)
  \STATE add $model$ to right of $population$
  \STATE add $model$ to $history$
  \ENDWHILE
  \WHILE{$|history|<N-P$}
  \STATE $sample \leftarrow \varnothing$
  \WHILE{$|sample|<S$}
  \STATE $candidate \leftarrow$ random element from $population$
  \STATE add $candidate$ to $sample$
  \ENDWHILE
  \STATE $parent \leftarrow$ highest-accuracy model in $sample$
  \STATE $child.arch \leftarrow$ Mutate($parent.arch$)
  \STATE $child.score \leftarrow$ GetScore($child.arch$)
  \STATE add $child$ to right of $population$
  \STATE add $child$ to $history$
  \STATE remove $dead$ from left of $population$
  \STATE discard $dead$
  \ENDWHILE
  \STATE \textbf{return} highest-accuracy model in $history$
\end{algorithmic}
\end{algorithm}

\section{Experiments}

\subsection{Implementation Details}

\subsubsection{Search Space and Benchmark}

We validate our training-free proxies and NAS algorithms on three search spaces and benchmarks.

NAS-Bench-101 search space \cite{NASBench101} contains 423k unique convolutional architectures.
The NAS-Bench-101 architectures are composed of a stem layer followed by stacks of cells. Each cell is stacked 3 times, followed by a downsampling layer. This pattern is repeated 3 times, followed by global average pooling and a final dense softmax layer. Cells are represented as directed acyclic graphs (DAGs) with $V \le 7$ nodes, $L=3$ candidate operations ($3\times 3$ convolution, $1\times 1$ convolution, and $3\times 3$ max-pool), and $E \le 9$ edges.
NAS-Bench-101 benchmark provides the evaluation results of three runs on the CIFAR-10 \cite{CIFAR} dataset for its 423k architectures.

NAS-Bench-201 search space \cite{NASBench201} consists of 15,625 possible architectures in total.
The NAS-Bench-201 architectures follow the similar design of its counterpart as used in the NAS-Bench-101 benchmark. Each cell is represented as a densely connected DAG with $V=4$ nodes. The DAG is obtained by assigning a direction from the $i$-th node to the $j$-th node ($i < j$) for each edge in an undirected complete graph. Different NAS-Bench-101, their edges are associated with one of $L=5$ candidate operations (zeroize, skip connection, $3\times 3$ convolution, $1\times 1$ convolution, and $3\times 3$ average-pool).
NAS-Bench-201 benchmark gives the evaluation results on the CIFAR-10 \cite{CIFAR}, CIFAR-100 \cite{CIFAR}, and ImageNet16-120 \cite{DownsampledImageNet} datasets for its 15,625 architectures. 

MobileNetV2 search space \cite{MobileNetV2, ZenNAS} is a popular open-domain search space. Each architecture consists of stacked MobileNet Blocks. In each block, we can search for the depth-wise expansion ratio (E), the kernel size (K), the stride (S), the number of layers (L), the number of channels (C), the number of bottleneck channels (B), and the resolution of input images (R). The details of our MobileNetV2 search space are shown in Table~\ref{tab: MBV2SPACE}.
\begin{table}
  \caption{The details of the MobileNetV2 search space. $[s, e, i]$ refers to the values with an interval of $i$ within the range $[s, e]$.}
  \label{tab: MBV2SPACE}
  \centering
  \begin{tabular}{c@{}c@{}c@{}c@{}c@{}c@{}c@{}c@{}c}
      \hline
      Stage & Op & E & K & S & L & C & B & R \\
      \hline
      - & Conv & -         & 3       & 2   & 1   & [16,48,8]    & -      & [192,320,32] \\
      1 & MB   & \{1,2,4,6\} & [3,7,2] & [1,2] & [1,2] & [16,48,8]    & [16,48,8]  & - \\
      2 & MB   & \{1,2,4,6\} & [3,7,2] & [1,2] & [2,3] & [16,48,8]    & [16,48,8]  & - \\
      3 & MB   & \{1,2,4,6\} & [3,7,2] & [1,2] & [2,3] & [32,64,8]    & [16,48,8]  & - \\
      4 & MB   & \{1,2,4,6\} & [3,7,2] & [1,2] & [2,4] & [48,96,8]    & [32,64,8]  & - \\
      5 & MB   & \{1,2,4,6\} & [3,7,2] & [1,2] & [2,6] & [80,256,8]   & [48,96,8]  & - \\
      6 & MB   & \{1,2,4,6\} & [3,7,2] & [1,2] & [2,6] & [80,256,8]   & [80,256,8] & - \\
      7 & MB   & \{1,2,4,6\} & [3,7,2] & [1,2] & [1,2] & [128,512,8]  & [80,256,8] & - \\
      - & Conv & -         & 1       & 1   & 1   & [128,1536,8] & -      & - \\
      \hline
  \end{tabular}
\end{table}

\subsubsection{Dataset and Data Augmentation}

We conduct our experiments on the CIFAR-10 \cite{CIFAR}, CIFAR-100 \cite{CIFAR}, ImageNet16-120 \cite{DownsampledImageNet}, and ImageNet-1k \cite{ImageNet} datasets, with their official training/validation/testing splits.
We implement the standard augmentations including random flip/crop and cutout for the CIFAR-10, CIFAR-100, and ImageNet16-120 datasets. We implement the following augmentations \cite{ZenNAS} for the ImageNet-1k dataset: random crop/resize/flip/lighting, AutoAugment, random erasing, mix-up, and label-smoothing.

\subsubsection{Proxy Evaluation}

We conduct the proxy evaluation experiments to prove the effectiveness of our training-free proxies. We use the Spearman ranking correlation coefficient (Spearman $\rho$) to quantify how well a proxy ranks architectures compared to the ground-truth ranking through training. For efficiency, we use 1000 randomly sampled architectures from the NAS-Bench-201 search space to estimate the real Spearman $\rho$ values of all architectures on the CIFAR-10 dataset for different proxies. We conduct multiple experiments across various settings, including different datasets (\textbf{CIFAR-10}, CIFAR-100, and ImageNet16-120 datasets), batch sizes (32, \textbf{64}, and 128), initializations (\textbf{Kaiming uniform}, Kaiming normal, and $N(0,1)$ initializations), and benchmarks (NAS-Bench-101 and \textbf{NAS-Bench-201}).
The default setting of our proxies is bold in our experiments. All experiments are conducted on a single Tesla V100 GPU.

\subsubsection{Architecture Search}

We use three promising training-free proxies (\textbf{ZeroCost}, \textbf{WRCor}, and \textbf{SJW}) as the estimation strategy of zero-shot NAS and conduct the architecture search experiments. We use three search strategies for zero-shot NAS, including random search, reinforcement learning, and evolutionary algorithm. We explore $N=1000$ architectures in NAS-Bench-101/201 search space for fair comparison with the results reported in NAS-Bench-201 \cite{NASBench201}. We also explore $N=5000$ architectures in MobileNetV2 search space that take four GPU hours. For reinforcement learning, we use an Adam optimizer with a learning rate of $\alpha=0.001$ and a momentum of 0.9 for the exponential moving average. We use the regularized evolution algorithm with a population size of $P=64$ and a tournament size of $S=10$. All experiments are conducted on a single Tesla V100 GPU.

\subsubsection{Architecture Evaluation}

To verify the effectiveness of our NAS algorithms, we evaluate the discovered NAS-Bench-101/201 architectures by looking up the architecture performance table of the corresponding benchmark and evaluate the discovered MobileNetV2 architectures by standard training on the ImageNet-1k dataset. We use an SGD optimizer with a momentum of 0.9 and a weight decay of 4e-5. Its initial learning rate is 0.2 and is scheduled by cosine decay. We use ResNet-152 as a teacher network and train the MobileNetV2 architectures with a batch size of 512 and 480 epochs. All experiments are conducted on four Tesla V100 GPUs.

\subsection{Analysis of Proxy Evaluation}

Table~\ref{tab: realVSextimated} reports the real and estimated Spearman $\rho$ values of all proxies. The deviations between the real and estimated Spearman $\rho$ values are within 6\% for most proxies, which can be acceptable for rapid proxy evaluation and ablation studies. We also report and analyze their Spearman $\rho$ values across different settings. All reported Spearman $\rho$ values in our paper are their absolute values.
\begin{table}
  \caption{The comparison of the real/estimated Spearman $\rho$ values for different proxies on the CIFAR-10 dataset and the NAS-Bench-201 benchmark. The real and estimated Spearman $\rho$ values are calculated by all architectures and 1000 randomly sampled architectures of the NAS-Bench-201 benchmark, respectively. $\Delta$: the deviation between the real and estimated Spearman $\rho$ values; \%: the deviation ratio. ValAcc: the validation accuracy through standard training. ValACC is a real performance metric and thus is the theoretical upper bound.}
  \label{tab: realVSextimated}
  \centering
  \begin{tabular}{lrrrr}
    \hline
    Proxies & est.  & real  & $\Delta$  & \%  \\
    \hline
    PNorm   & 0.684 & 0.684 & 0.000   & 0.00  \\
    PGNorm  & 0.591 & 0.594 & \textcolor{green}{-0.003}  & \textcolor{green}{-0.51} \\
    SNIP    & 0.593 & 0.596 & \textcolor{green}{-0.003}  & \textcolor{green}{-0.50} \\
    GraSP   & 0.543 & 0.515 & \textcolor{red}{0.028}   & \textcolor{red}{5.44}  \\
    SynFlow & 0.743 & 0.739 & \textcolor{red}{0.004}   & \textcolor{red}{0.54}  \\
    Fisher  & 0.506 & 0.503 & \textcolor{red}{0.003}   & \textcolor{red}{0.60}  \\
    JacCor  & 0.726 & 0.726 & 0.000   & 0.00  \\
    EPE     & 0.693 & 0.694 & \textcolor{green}{-0.001}  & \textcolor{green}{-0.14} \\
    Phi     & 0.703 & 0.723 & \textcolor{green}{-0.020}  & \textcolor{green}{-2.77} \\
    Zen     & 0.257 & 0.237 & \textcolor{red}{0.020}   & \textcolor{red}{8.44}  \\
    BT      & 0.510 & 0.472 & \textcolor{red}{0.038}   & \textcolor{red}{8.05}  \\
    BndlEnt & 0.199 & 0.188 & \textcolor{red}{0.011}   & \textcolor{red}{5.85}  \\
    LyDynIso& 0.633 & 0.632 & \textcolor{red}{0.001}   & \textcolor{red}{0.16}  \\
    DynIso  & 0.662 & 0.674 & \textcolor{green}{-0.012}  & \textcolor{green}{-1.78} \\
    NetSens & 0.686 & 0.685 & \textcolor{red}{0.001}   & \textcolor{red}{0.15}  \\
    NASWOT  & 0.779 & 0.780 & \textcolor{green}{-0.001}  & \textcolor{green}{-0.13} \\
    ZiCo    & -     & 0.800 & -       & -     \\
    Params  & -     & 0.750 & -       & -     \\
    FLOPs   & -     & 0.730 & -       & -     \\
    \hline
    ValAcc  & 0.842 & 0.811 & \textcolor{red}{0.031}   & \textcolor{red}{3.82}  \\
    \hline
    ACor (Ours)    & 0.787 & 0.789 & \textcolor{green}{-0.002} & \textcolor{green}{-0.25} \\
    RCor (Ours)    & 0.803 & 0.803 & \textbf{0.000} & \textbf{0.00} \\
    WRCor (Ours)   & \textbf{0.811} & \textbf{0.812} & \textcolor{green}{-0.001} & \textcolor{green}{-0.12} \\
    \hline
    ZeroCost& 0.775 & 0.750 & \textcolor{red}{0.025} & \textcolor{red}{3.33} \\
    SPW (Ours)     & 0.789 & 0.780 & \textbf{\textcolor{red}{0.009}} & \textbf{\textcolor{red}{1.15}} \\
    SJW (Ours)     & \textbf{0.851} & \textbf{0.799} & \textcolor{red}{0.052} & \textcolor{red}{6.51} \\
    \hline
  \end{tabular}
\end{table}

\paragraph{Distinction of Bottom/Top-Level Features}

Through experiments, we find that for superior architectures, the correlations of responses in their top layers are greater than those in their bottom layers. Conversely, for inferior architectures, the correlations in their top layers are worse than those in their bottom layers. For example, for the best NAS-Bench-101 architecture, the correlation scores in its first and final layers are -685.21 and -506.27 respectively; for the worst NAS-Bench-101 architecture, the scores in its first and final layers are -538.55 and -1098.02 respectively.
The scores of their final layers are positively correlated with their accuracies, which is consistent with our conclusion. The scores of their first layers are negatively correlated with their accuracies. This is not a common phenomenon because it is an extreme case where we are comparing the best and worst architectures.
Based on these observations, we propose a layer-wise weighting score, i.e., WRCor.

\paragraph{Different Ways of Weighting Correlation of Responses}

To find a superior weighting way, we experiment with different ways of weighting the correlation of responses, including no weighting (no), weighting linearly (linear), quadratically (quad), and exponentially (expb2t), with weights increasing either from the bottom to the top layers or vice versa (expt2b). As shown in Table~\ref{tab: weighting}, the expb2t weighting way can achieve a notable improvement compared to linearly and quadratically weighting ways. WRCor with the expb2t weighting has 0.008 (1.00\%) higher Spearman $\rho$ value than RCor without weighting. The expt2b weighting harms architecture ranking and has 0.003 (0.37\%) lower Spearman $\rho$ value than RCor. Experimental results demonstrate that the correlations of responses in the top layers are more significant than those in the bottom layers. The expb2t weighting adopted by our WRCor proxy outperforms other weighting ways.
Besides, the above case of best and worst NAS-Bench-101 architectures also indicate that we should pay more attention the correlations of responses in the top layers and pay less attention or even ignore thoes in the bottom layers. Therefore, we adopt exponential weighting.
\begin{table}
  \caption{The comparison of different weighting ways for correlation of responses on the CIFAR-10 dataset and the NAS-Bench-201 benchmark. $\Delta$: the deviation between RCor and the proxy; \%: the deviation ratio.}
  \label{tab: weighting}
  \centering
  \begin{tabular}{lrrr}
    \hline
    Proxies   & $\rho$  & $\Delta$ & \%   \\
    \hline
    no (RCor) & 0.803   & 0.000   & 0.00  \\
    linear    & 0.805   & \textcolor{red}{0.002}   & \textcolor{red}{0.25}  \\
    quad & 0.806   & \textcolor{red}{0.003}   & \textcolor{red}{0.37}  \\
    expb2t (WRCor) & \textbf{0.811}   & \textbf{\textcolor{red}{0.008}}   & \textbf{\textcolor{red}{1.00}}  \\
    expt2b & 0.800   & \textcolor{green}{-0.003}  & \textcolor{green}{-0.37} \\
    \hline
  \end{tabular}
\end{table}

\paragraph{How Do Our Proposed Proxies Perform?}

Table~\ref{tab: realVSextimated} reports the Spearman $\rho$ values of ACor, RCor, and WRCor. ACor significantly outperforms most existing proxies, including Params and FLOPs. But it performs marginally better than NASWOT and falls short of ZiCo. RCor and WRCor achieve improvements of 0.014 (1.77\%) and 0.023 (2.92\%) higher Spearman $\rho$ values than ACor, respectively. WRCor surpasses ZiCo with 0.012 (1.5\%) higher Spearman $\rho$ value. Remarkably, a single WRCor proxy outperforms even two voting proxies (i.e., ZeroCost and SPW) and is close to ValACC in terms of the real Spearman $\rho$ value. WRCor is more efficient than standard training, requiring approximately two seconds to estimate each architecture. Therefore, WRCor is an accurate and efficient training-free proxy. Further analyses are as follows.

\paragraph{Visualization of Different Proxies}

Figure~\ref{fig: vsTestAcc} demonstrates the relationships between different estimated proxy metrics and the real test accuracy. Figure~\ref{fig: SynFlow} and \ref{fig: JacCor} demonstrate that the skewed distributions of SynFlow and JacCor proxies are not suitable for ranking these architectures and are the main reason for the low Spearman correlation. As expected, Figure~\ref{fig: Params} and Figure~\ref{fig: FLOPs} show that the Params and FLOPs proxies exhibit excellent performance within each score segment, but it does not perform as well across all score segments collectively. Figure~\ref{fig: ACor} and \ref{fig: RCor} show Acor and Rcor have a stronger correlation with the real test accuracy. Specifically, with the help of the correlation of gradients, the distribution in Figure~\ref{fig: RCor} is approximately consistent with that of Figure~\ref{fig: ValAcc}. As shown in Figure~\ref{fig: WRCor}, WRCor exhibits a more compact distribution by weighting the correlation of responses, which demonstrates that WRCor has a higher Spearman ranking correlation with the real test accuracy. 
\begin{figure*}
  \centering
  \begin{subfigure}{.24\textwidth}
    \centering
    \includegraphics[width=\linewidth]{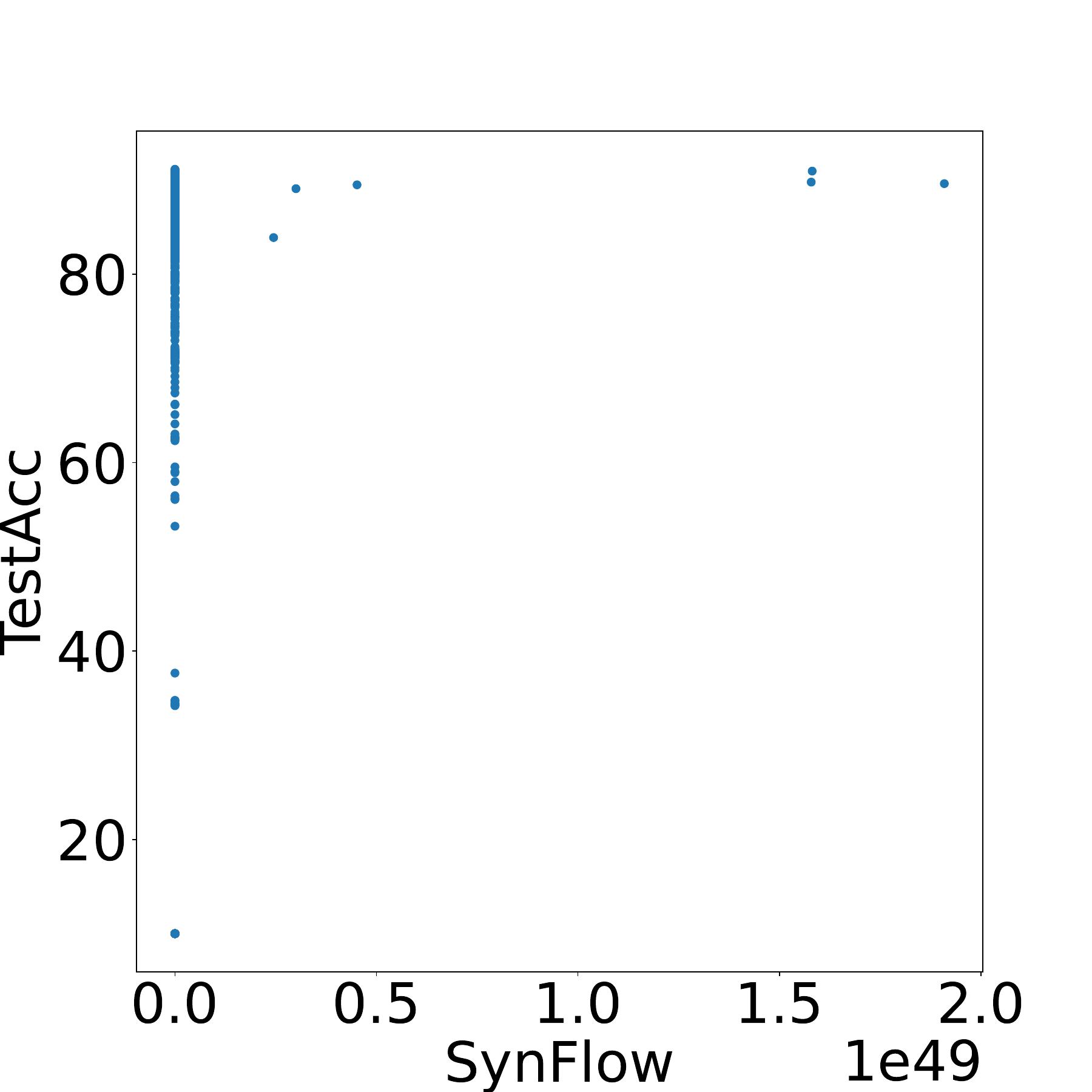}
    \caption{SynFlow ($\rho$: 0.743)}\label{fig: SynFlow}
  \end{subfigure}
  \begin{subfigure}{.24\textwidth}
    \centering
    \includegraphics[width=\linewidth]{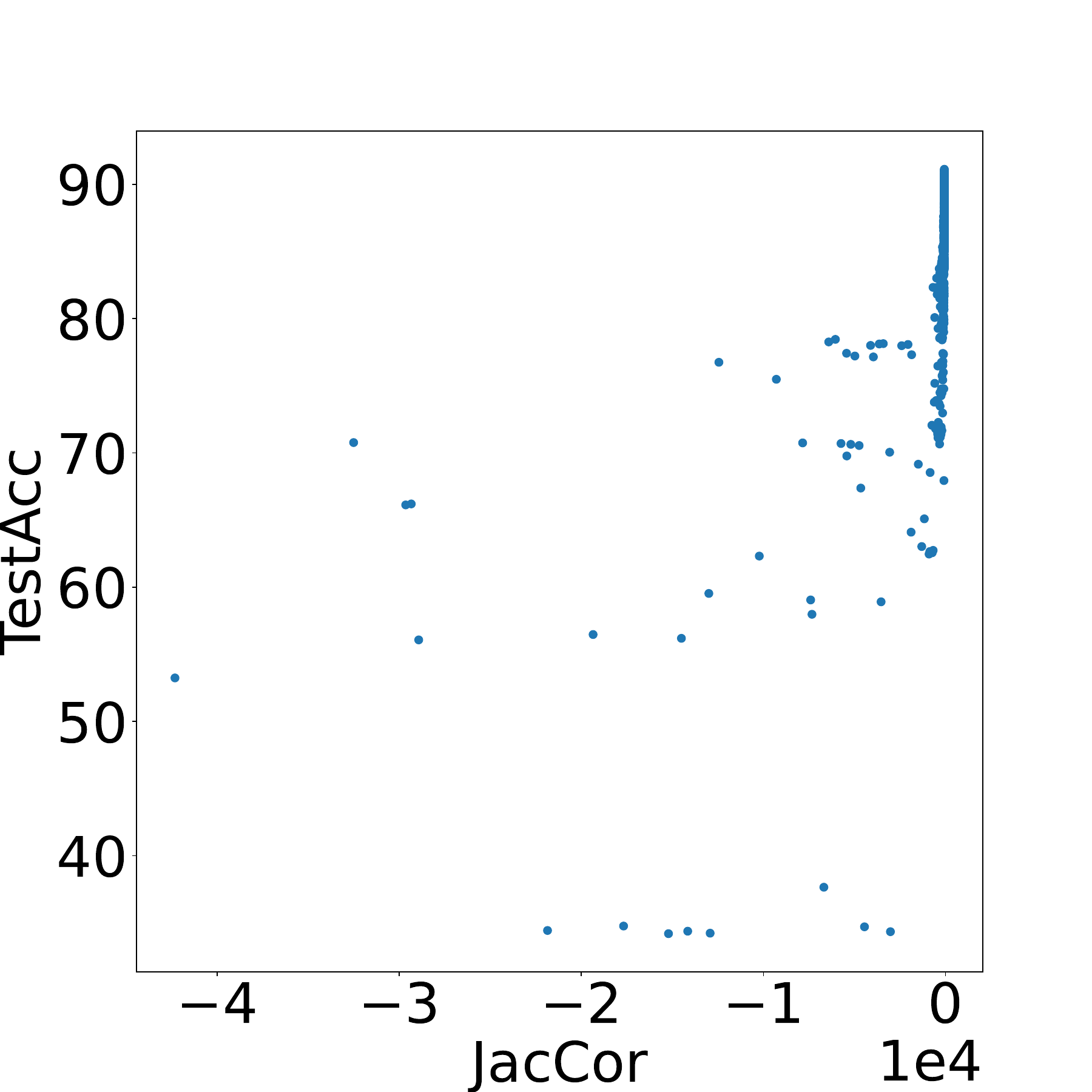}
    \caption{JacCor ($\rho$: 0.726)}\label{fig: JacCor}
  \end{subfigure}
  \begin{subfigure}{.24\textwidth}
    \centering
    \includegraphics[width=\linewidth]{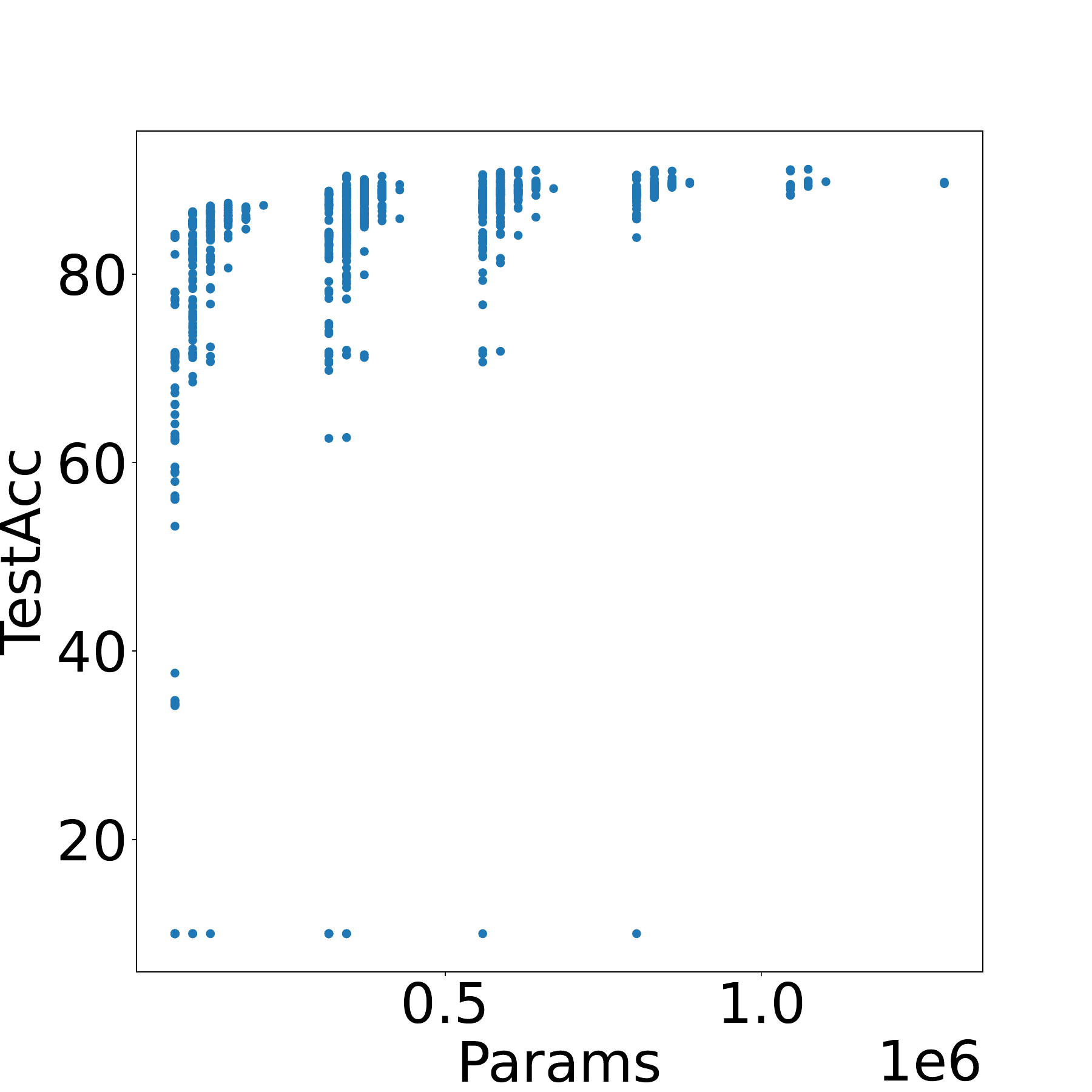}
    \caption{Params ($\rho$: 0.750)}\label{fig: Params}
  \end{subfigure}
  \begin{subfigure}{.24\textwidth}
    \centering
    \includegraphics[width=\linewidth]{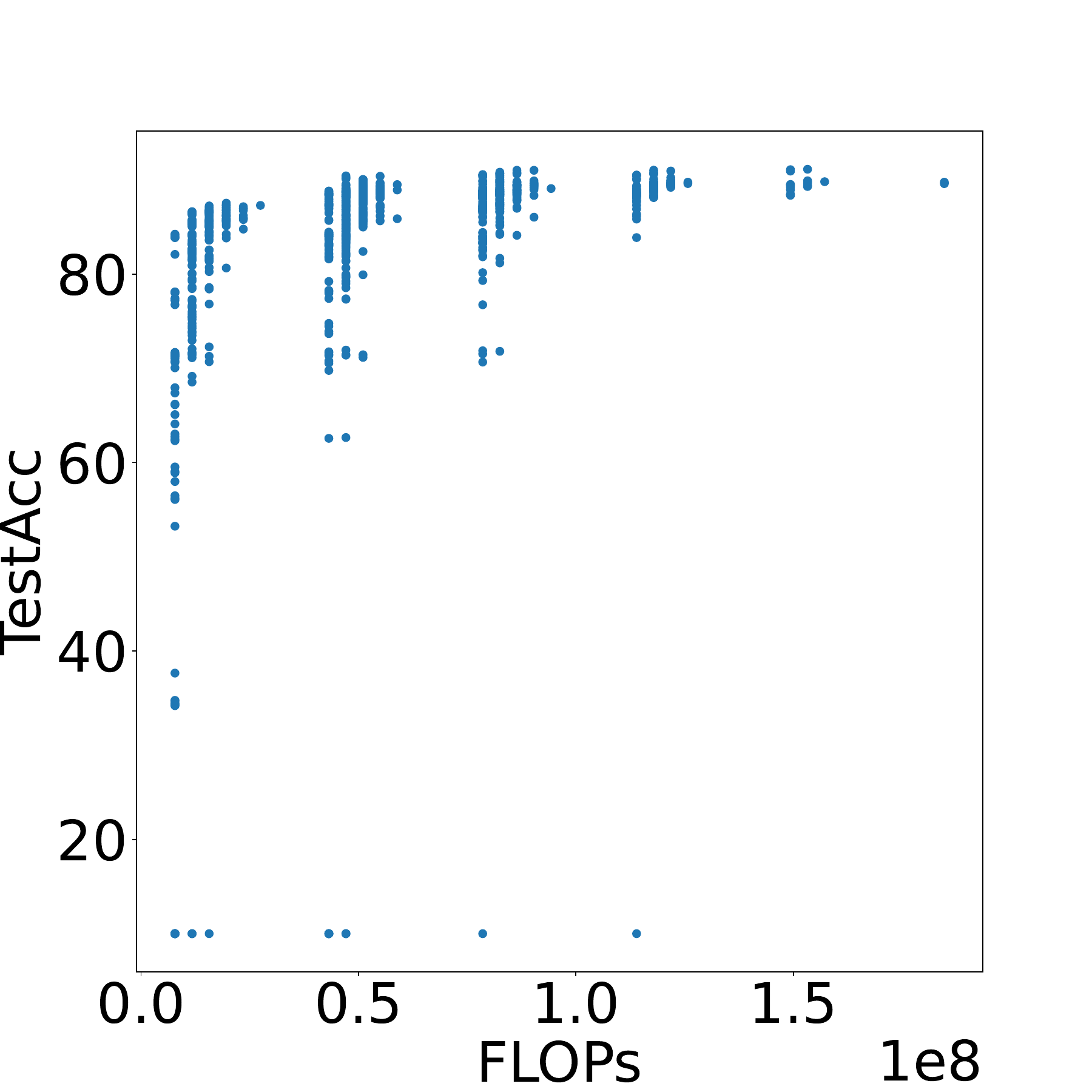}
    \caption{FLOPs ($\rho$: 0.730)}\label{fig: FLOPs}
  \end{subfigure}
  
  \begin{subfigure}{.24\textwidth}
    \centering
    \includegraphics[width=\linewidth]{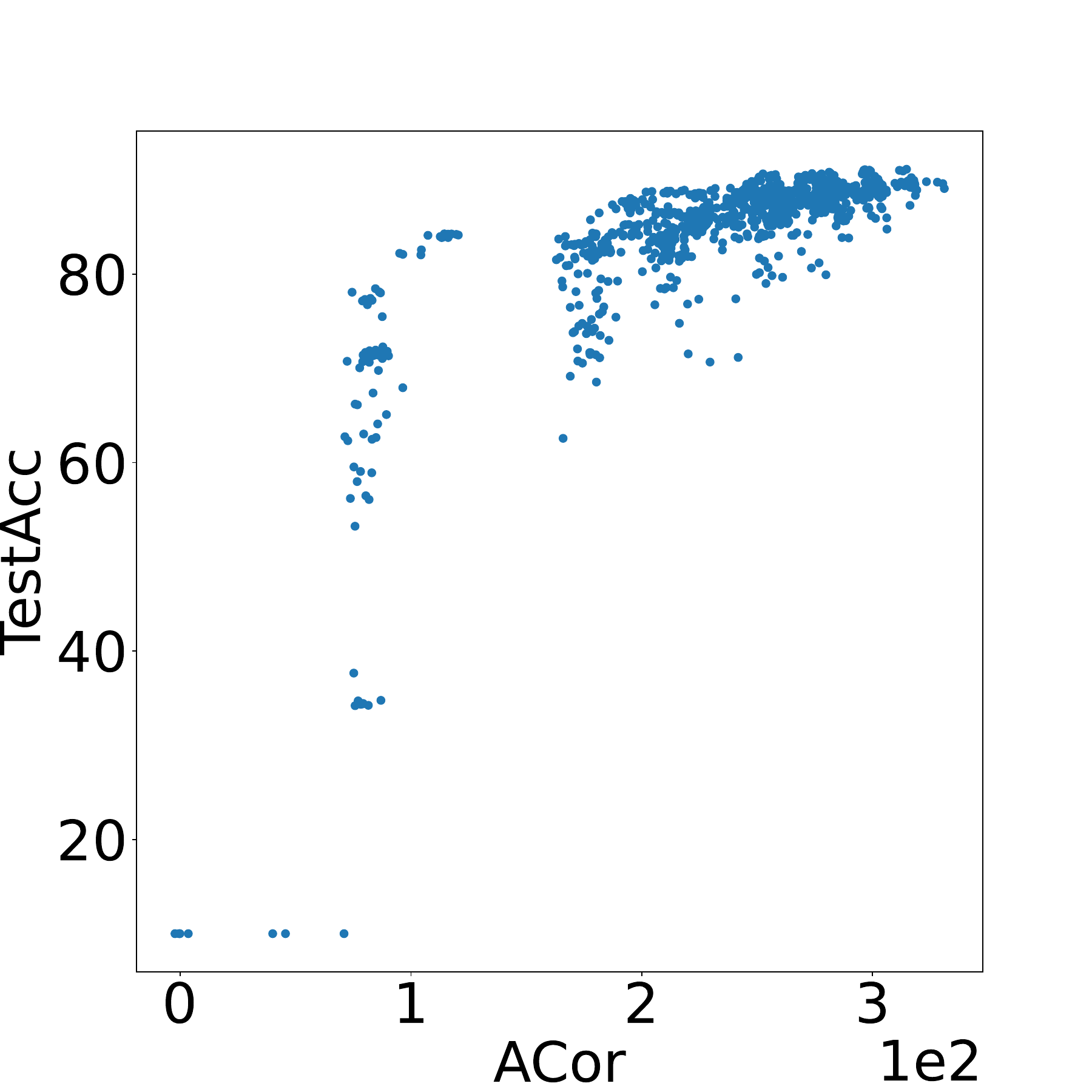}
    \caption{ACor ($\rho$: 0.787)}\label{fig: ACor}
  \end{subfigure}
  \begin{subfigure}{.24\textwidth}
    \centering
    \includegraphics[width=\linewidth]{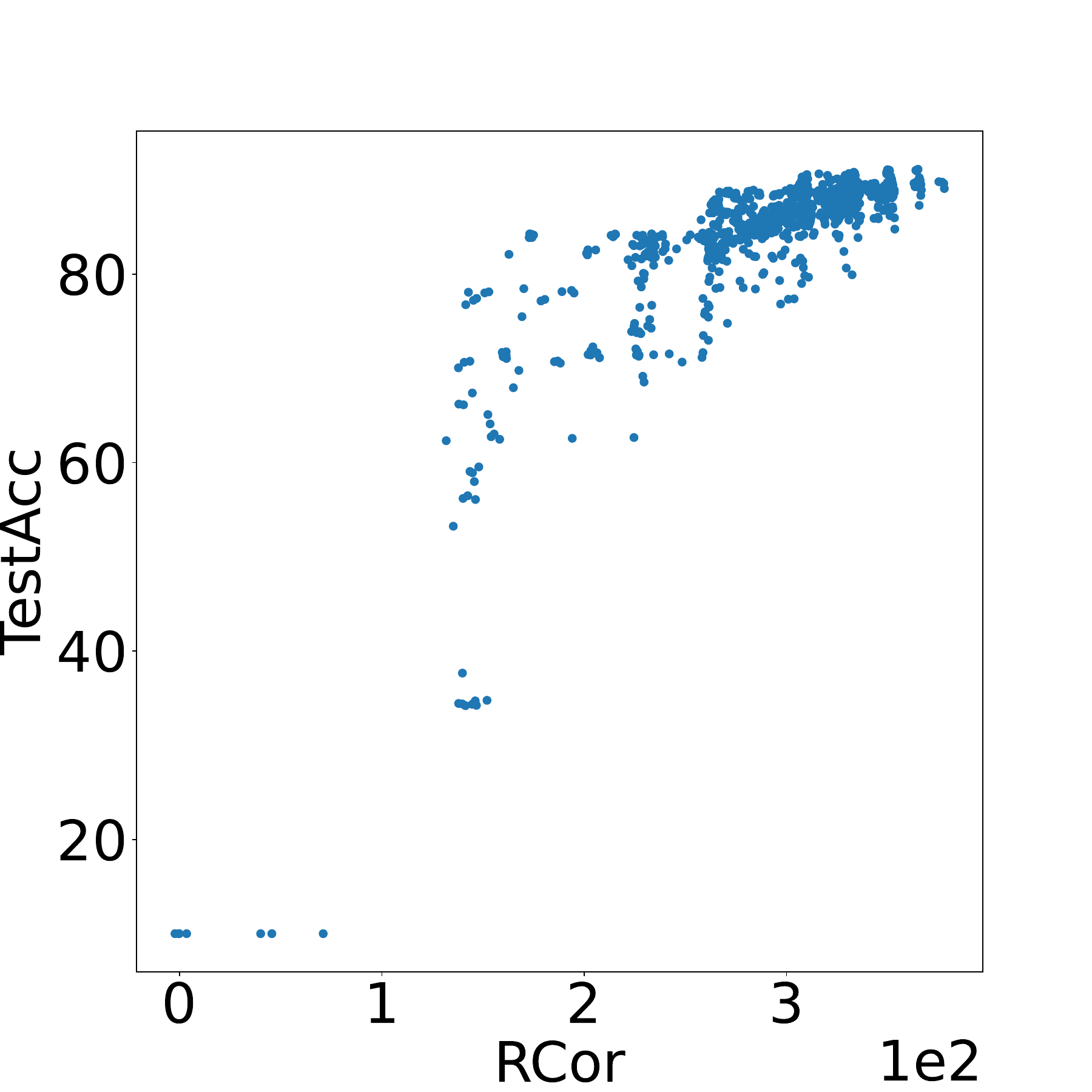}
    \caption{RCor ($\rho$: 0.803)}\label{fig: RCor}
  \end{subfigure}
  \begin{subfigure}{.24\textwidth}
    \centering
    \includegraphics[width=\linewidth]{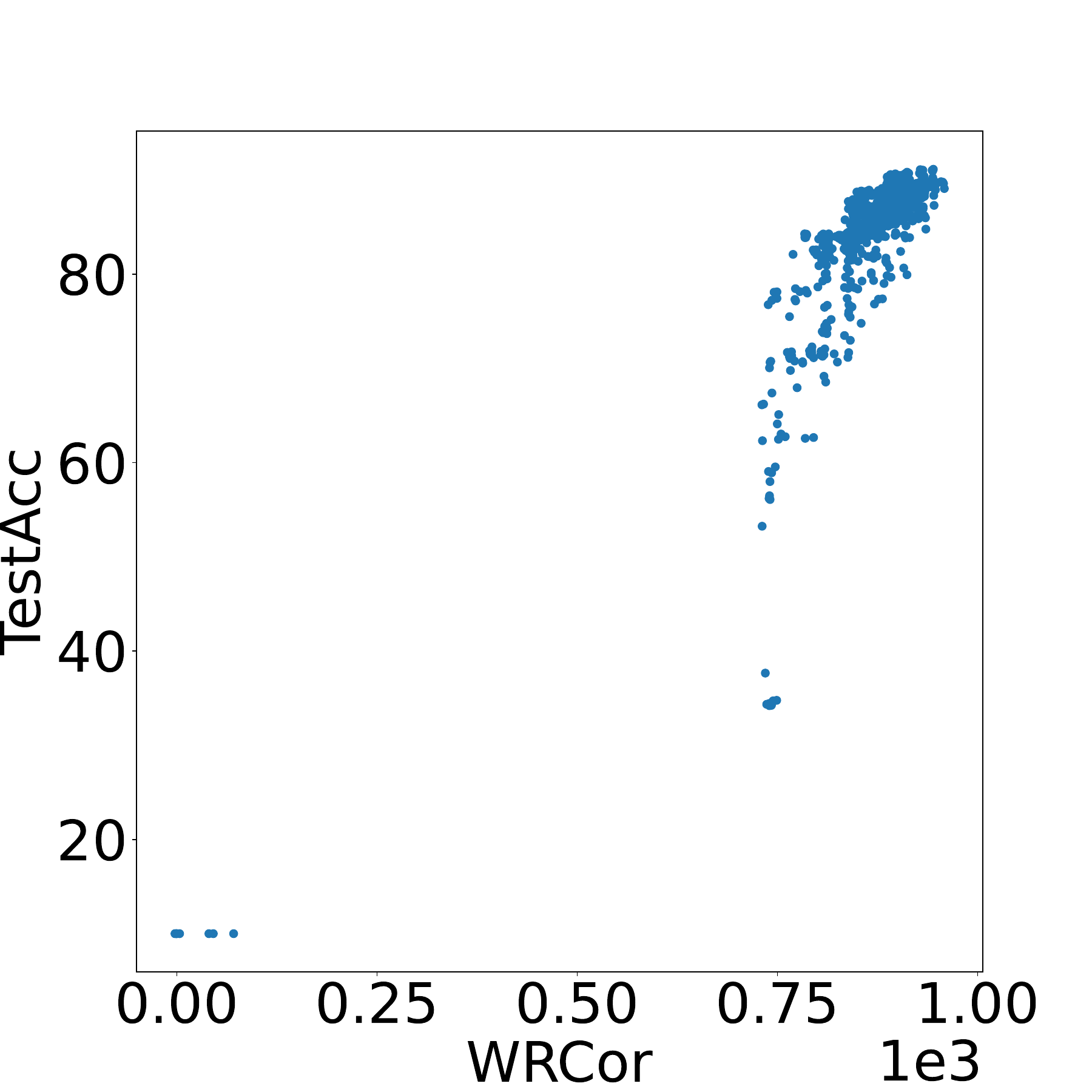}
    \caption{WRCor ($\rho$: 0.811)}\label{fig: WRCor}
  \end{subfigure}
  \begin{subfigure}{.24\textwidth}
    \centering
    \includegraphics[width=\linewidth]{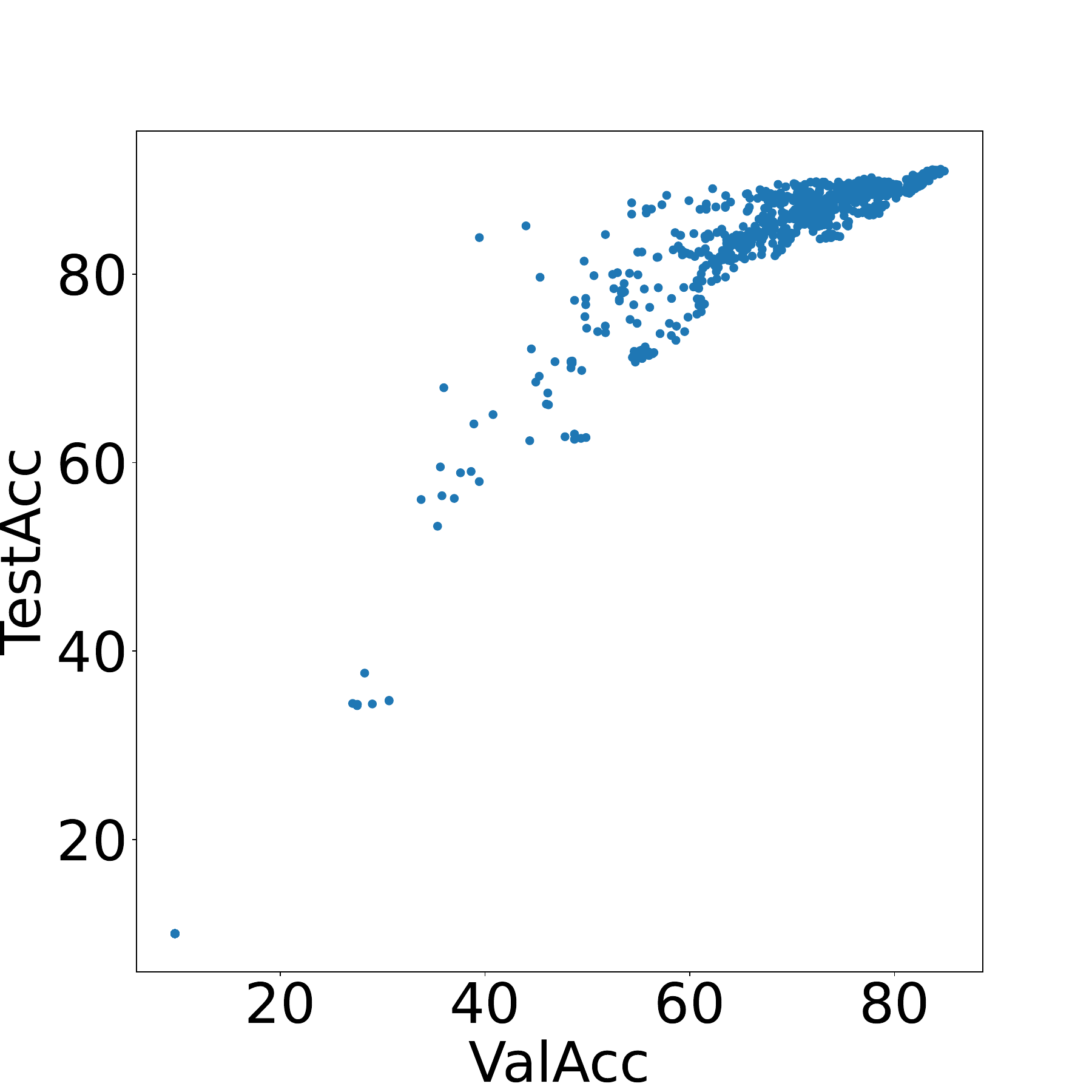}
    \caption{ValAcc ($\rho$: 0.842)}\label{fig: ValAcc}
  \end{subfigure}
  \caption{The visualization of different proxies and test accuracy on the CIFAR-10 dataset by 1000 random NAS-Bench-201 architectures.}\label{fig: vsTestAcc}
\end{figure*}

\paragraph{How Do They Perform across Different Datasets?}

Table~\ref{tab: datasetVS} reports the Spearman $\rho$ values of all mentioned proxies across the CIFAR-10, CIFAR-100, and ImageNet16-120 datasets. Our proxies are superior to existing proxies in terms of the average and single Spearman $\rho$ values across three different datasets. Specifically, the average Spearman $\rho$ value of WRCor is 0.026 (3.25\%) higher than that of the state-of-the-art ZiCo. Furthermore, WRCor is comparable to or even surpasses ValACC on CIFAR-100 and ImageNet16-120 datasets. Besides, our proposed ACor, RCor, and WRCor proxies are also more robust than most existing proxies with a low standard deviation of 0.016, 0.015, and 0.015. Therefore, our proposed proxies can perform better than existing proxies across different datasets and are insensitive to datasets.
\begin{table}
  \caption{The comparison of different proxies across the CIFAR-10 (c10), CIFAR-100 (c100), and ImageNet16-120 (im120) datasets on the NAS-Bench-201 benchmark.}
  \label{tab: datasetVS}
  \centering
  \begin{tabular}{lrrrrrr}
    \hline
    Proxies & c10   & c100  & im120 & avg.  & std.  \\
    \hline
    PNorm   & 0.684 & 0.712 & 0.711 & 0.702 & 0.013 \\
    PGNorm  & 0.591 & 0.648 & 0.588 & 0.609 & 0.028 \\
    SNIP    & 0.593 & 0.648 & 0.588 & 0.610 & 0.027 \\
    GraSP   & 0.543 & 0.555 & 0.551 & 0.550 & 0.005 \\
    SynFlow & 0.743 & 0.771 & 0.755 & 0.756 & 0.011 \\
    Fisher  & 0.506 & 0.563 & 0.501 & 0.523 & 0.028 \\
    JacCor  & 0.726 & 0.721 & 0.716 & 0.721 & \textbf{0.004} \\
    EPE     & 0.693 & 0.602 & 0.566 & 0.620 & 0.053 \\
    Phi     & 0.703 & 0.730 & 0.663 & 0.699 & 0.028 \\
    Zen     & 0.257 & 0.297 & 0.293 & 0.282 & 0.018 \\
    BT      & 0.510 & 0.449 & 0.460 & 0.473 & 0.027 \\
    BndlEnt & 0.199 & 0.192 & 0.130 & 0.174 & 0.031 \\
    LyDynIso& 0.633 & 0.682 & 0.668 & 0.661 & 0.021 \\
    DynIso  & 0.662 & 0.728 & 0.674 & 0.688 & 0.029 \\
    NetSens & 0.686 & 0.706 & 0.624 & 0.672 & 0.035 \\
    NASWOT  & 0.779 & 0.818 & 0.791 & 0.796 & 0.016 \\
    ZiCo    & 0.800 & 0.810 & 0.790 & 0.800 & 0.008 \\
    Params  & 0.750 & 0.730 & 0.690 & 0.723 & 0.025 \\
    FLOPs   & 0.730 & 0.710 & 0.670 & 0.703 & 0.025 \\
    \hline
    ValAcc  & 0.842 & 0.845 & 0.833 & 0.840 & 0.005 \\
    \hline
    ACor (Ours)    & 0.787 & 0.823 & 0.790  & 0.800 & 0.016 \\
    RCor (Ours)    & 0.803 & 0.839 & 0.817  & 0.820 & 0.015 \\
    WRCor (Ours)   & \textbf{0.811} & \textbf{0.846} & \textbf{0.822}  & \textbf{0.826} & 0.015 \\
    \hline
    ZeroCost       & 0.775 & 0.802 & 0.775  & 0.784 & 0.013 \\
    SPW (Ours)     & 0.789 & 0.823 & 0.808  & 0.807 & 0.014 \\
    SJW (Ours)     & \textbf{0.851} & \textbf{0.846} & \textbf{0.847}  & \textbf{0.848} & \textbf{0.002} \\
    \hline
  \end{tabular}
\end{table}

\paragraph{How Do They Perform across Different Batch Sizes?}

Table~\ref{tab: batchsizeVS} reports the Spearman $\rho$ values of all mentioned proxies across different batch sizes of the inputs, including 32, 64, and 128. Although RCor and WRCor exhibit a higher sensitivity to the factor of batch size, with a standard deviation of 0.009, compared to most existing proxies, it is noteworthy that their Spearman $\rho$ values positively correlate with batch size. This implies that given sufficient computing resources, they perform better when utilizing larger batch sizes. Furthermore, ACor, RCor, and WRCor outperform NASWOT by the higher average Spearman $\rho$ values of 0.004 (0.51\%), 0.024 (3.08\%), and 0.030 (3.85\%) than NASWOT across different batch sizes. In conclusion, our proxies demonstrate superior performance compared to existing proxies across different batch sizes and can benefit from the use of larger batch sizes.
\begin{table}
  \caption{The comparison of different proxies across different batch sizes of 32, 64, and 128 on the CIFAR-10 dataset and the NAS-Bench-201 benchmark.}
  \label{tab: batchsizeVS}
  \centering
  \begin{tabular}{lrrrrrr}
    \hline
    Proxies & 32    & 64    & 128   & avg.  & std.  \\
    \hline
    PNorm   & 0.684 & 0.684 & 0.684 & 0.684 & \textbf{0.000} \\
    PGNorm  & 0.597 & 0.591 & 0.586 & 0.591 & 0.004 \\
    SNIP    & 0.599 & 0.593 & 0.589 & 0.594 & 0.004 \\
    GraSP   & 0.522 & 0.543 & 0.519 & 0.528 & 0.011 \\
    SynFlow & 0.743 & 0.743 & 0.743 & 0.743 & \textbf{0.000} \\
    Fisher  & 0.510 & 0.506 & 0.507 & 0.508 & 0.002 \\
    JacCor  & 0.730 & 0.726 & 0.735 & 0.730 & 0.004 \\
    EPE     & 0.661 & 0.693 & 0.703 & 0.686 & 0.018 \\
    Phi     & 0.701 & 0.703 & 0.704 & 0.703 & 0.001 \\
    Zen     & 0.258 & 0.257 & 0.256 & 0.257 & 0.001 \\
    BT      & 0.466 & 0.510 & 0.541 & 0.506 & 0.031 \\
    BndlEnt & 0.185 & 0.199 & 0.228 & 0.204 & 0.018 \\
    LyDynIso& 0.633 & 0.633 & 0.633 & 0.633 & \textbf{0.000} \\
    DynIso  & 0.654 & 0.662 & 0.667 & 0.661 & 0.005 \\
    NetSens & 0.686 & 0.686 & 0.686 & 0.686 & \textbf{0.000} \\
    NASWOT  & 0.775 & 0.779 & 0.782 & 0.779 & 0.003 \\
    ZiCo    & -     & -     & -     & -     & -     \\
    Params  & 0.750 & 0.750 & 0.750 & 0.750 & \textbf{0.000} \\
    FLOPs   & 0.730 & 0.730 & 0.730 & 0.730 & \textbf{0.000} \\
    \hline
    ValAcc  & 0.842 & 0.842 & 0.842 & 0.842 & 0.000 \\
    \hline
    ACor (Ours)    & 0.781 & 0.787 & 0.780  & 0.783 & 0.003 \\
    RCor (Ours)    & 0.791 & 0.803 & 0.812  & 0.802 & 0.009 \\
    WRCor (Ours)   & \textbf{0.797} & \textbf{0.811} & \textbf{0.818}  & \textbf{0.809} & 0.009 \\
    \hline
    ZeroCost       & 0.783 & 0.775 & 0.768  & 0.775 & 0.006 \\
    SPW (Ours)     & 0.786 & 0.789 & 0.791  & 0.789 & \textbf{0.002} \\
    SJW (Ours)     & \textbf{0.829} & \textbf{0.851} & \textbf{0.814}  & \textbf{0.831} & 0.015 \\
    \hline
  \end{tabular}
\end{table}

\paragraph{How Do They Perform across Different Initializations?}

Table~\ref{tab: initializationVS} reports the Spearman $\rho$ values of all mentioned proxies across the Kaiming uniform \cite{KaimingInit}, Kaiming normal \cite{KaimingInit}, and the standard normal ($N(0,1)$) \cite{ZenNAS} initializations. The Spearman $\rho$ values of ACor, RCor, and WRCor exhibit remarkably low standard deviations across different initializations, specifically 0.009, 0.004, and 0.006, respectively. This demonstrates that our proposed proxies are robust and insensitive to initialization. Besides, RCor and WRCor have 0.020 (2.57\%) and 0.026 (3.34\%) higher average Spearman $\rho$ values than NASWOT across different initializations. Consequently, when utilizing our proposed proxies, there is no need to carefully choose the way to initialize the networks for higher Spearman $\rho$ values.
\begin{table}
  \caption{The comparison of different proxies across Kaiming uniform (uniform), Kaiming normal (normal), and $N(0,1)$ (n01) initializations on the CIFAR-10 dataset and the NAS-Bench-201 benchmark.}
  \label{tab: initializationVS}
  \centering
  \begin{tabular}{lrrrrrr}
    \hline
    Proxies &uniform& normal& n01   & avg.  & std.  \\
    \hline
    PNorm   & 0.684 & 0.692 & 0.725 & 0.700 & 0.018 \\
    PGNorm  & 0.591 & 0.559 & 0.369 & 0.506 & 0.098 \\
    SNIP    & 0.593 & 0.570 & 0.518 & 0.560 & 0.031 \\
    GraSP   & 0.543 & 0.451 & 0.106 & 0.367 & 0.188 \\
    SynFlow & 0.743 & 0.738 & 0.743 & 0.741 & 0.002 \\
    Fisher  & 0.506 & 0.480 & 0.453 & 0.480 & 0.022 \\
    JacCor  & 0.726 & 0.739 & 0.733 & 0.733 & 0.005 \\
    EPE     & 0.693 & 0.698 & 0.719 & 0.703 & 0.011 \\
    Phi     & 0.703 & 0.670 & 0.657 & 0.677 & 0.019 \\
    Zen     & 0.257 & 0.036 & 0.352 & 0.215 & 0.132 \\
    BT      & 0.510 & 0.558 & 0.568 & 0.545 & 0.025 \\
    BndlEnt & 0.199 & 0.028 & 0.040 & 0.089 & 0.078 \\
    LyDynIso& 0.633 & 0.612 & 0.717 & 0.654 & 0.045 \\
    DynIso  & 0.662 & 0.587 & 0.358 & 0.536 & 0.129 \\
    NetSens & 0.686 & 0.641 & 0.625 & 0.651 & 0.026 \\
    NASWOT  & 0.779 & 0.777 & 0.777 & 0.778 & 0.001 \\
    ZiCo    & -     & 0.800 & -     & -     & -     \\
    Params  & 0.750 & 0.750 & 0.750 & 0.750 & \textbf{0.000} \\
    FLOPs   & 0.730 & 0.730 & 0.730 & 0.730 & \textbf{0.000} \\
    \hline
    ValAcc  & 0.842 & 0.842 & 0.842 & 0.842 & 0.000 \\
    \hline
    ACor (Ours)    & 0.787 & 0.775 & 0.766  & 0.776 & 0.009 \\
    RCor (Ours)    & 0.803 & 0.798 & 0.794  & 0.798 & 0.004 \\
    WRCor (Ours)   & \textbf{0.811} & \textbf{0.805} & \textbf{0.796}  & \textbf{0.804} & 0.006 \\
    \hline
    ZeroCost       & 0.775 & 0.791 & 0.700  & 0.755 & 0.040 \\
    SPW (Ours)     & 0.789 & 0.780 & 0.804  & 0.791 & 0.010 \\
    SJW (Ours)     & \textbf{0.851} & \textbf{0.829} & \textbf{0.839}  & \textbf{0.840} & \textbf{0.009} \\
    \hline
  \end{tabular}
\end{table}

\paragraph{How Do They Perform across Different Search Spaces?}

Table~\ref{tab: searchspaceVS} reports the Spearman $\rho$ values of all mentioned proxies across the NAS-Bench-101 and NAS-Bench-201 search spaces. The Spearman $\rho$ values of all proxies on the NAS-Bench-101 search space are significantly lower than those on the NAS-Bench-201 search space. Especially, SNIP, Fisher, JacCor, and NetSens change from positive correlation to negative correlation. It is also observed that the Spearman $\rho$ value of ValACC is significantly reduced. This reduction could be due to the significantly larger size of the NAS-Bench-101 search space compared to NAS-Bench-201. Our proposed proxies demonstrate a smaller reduction of Spearman $\rho$ value from NAS-Bench-201 to NAS-Bench-101 than most existing proxies. Furthermore, ACor, RCor, and WRCor exhibit Spearman $\rho$ values that are 0.126 (30.73\%), 0.143 (34.88\%), and 0.124 (30.24\%) higher than NASWOT on the NAS-Bench-101 search space, respectively. But they perform slightly worse than PNorm on the NAS-Bench-101 search space. PNorm maintains a smaller reduction of Spearman $\rho$ value due to its direct relationship with parameters and their initial distributions, which may allow its better generalization on the search spaces of different sizes. Additionally, while WRCor performs slightly worse than ACor and RCor on the NAS-Bench-101 search space, it still outperforms most existing proxies. Overall, our proposed proxies are still robust and powerful even in the larger NAS-Bench-101 search space.
\begin{table}
  \caption{The comparison of different proxies across NAS-Bench-101 (NB101) and NAS-Bench-201 (NB201) search spaces on the CIFAR-10 dataset. $|\Delta|$: the absolute deviation of the Spearman $\rho$ values.}
  \label{tab: searchspaceVS}
  \centering
  \begin{tabular}{lrrr}
    \hline
    Proxies & NB101 & NB201 & $|\Delta|$  \\
    \hline
    PNorm   & \textbf{0.561} & 0.684 & 0.123 \\
    PGNorm  & 0.248 & 0.591 & 0.343 \\
    SNIP    & 0.157 & 0.593 & 0.436 \\
    GraSP   & 0.357 & 0.543 & 0.186 \\
    SynFlow & 0.429 & 0.743 & 0.314 \\
    Fisher  & 0.281 & 0.506 & 0.225 \\
    JacCor  & 0.362 & 0.726 & 0.364 \\
    EPE     & 0.008 & 0.693 & 0.685 \\
    Phi     & 0.016 & 0.703 & 0.687 \\
    Zen     & 0.200 & 0.257 & \textbf{0.057} \\
    BT      & 0.054 & 0.510 & 0.456 \\
    BndlEnt & 0.012 & 0.199 & 0.187 \\
    LyDynIso& 0.486 & 0.633 & 0.147 \\
    DynIso  & 0.281 & 0.662 & 0.381 \\
    NetSens & 0.302 & 0.686 & 0.384 \\
    NASWOT  & 0.410 & 0.779 & 0.369 \\
    ZiCo    & -     & 0.800 & -     \\
    Params  & 0.440 & 0.750 & 0.310 \\
    FLOPs   & 0.430 & 0.730 & 0.300 \\
    \hline
    ValAcc  & 0.647 & 0.842 & 0.195 \\
    \hline
    ACor (Ours)    & 0.536 & 0.787 & 0.251 \\
    RCor (Ours)    & 0.553 & 0.803 & 0.250 \\
    WRCor (Ours)   & 0.534 & \textbf{0.811} & 0.277 \\
    \hline
    ZeroCost       & 0.000 & 0.775 & 0.775 \\
    SPW (Ours)     & \textbf{0.569} & 0.789 & \textbf{0.220} \\
    SJW (Ours)     & 0.383 & \textbf{0.851} & 0.468 \\
    \hline
  \end{tabular}
\end{table}

\paragraph{Does the input of our proposed proxy require relevance to the target dataset?}

Instead of utilizing images from datasets, we compute our proxies using noises sampled from the standard normal distribution as the input. WRCor with noises can achieve the Spearman $\rho$ value of 0.784, which is 0.027 (3.33\%) lower than WRCor with the CIFAR-10 dataset. The experimental result demonstrates that even without the use of relevant datasets, our proxies can still achieve the 0.005 (0.64\%) higher Spearman $\rho$ value than NASWOT. Therefore, our proposed proxies, despite not relying on relevant datasets, still exhibit competitive performance compared to other existing proxies, and the performance of our proxies can be further enhanced using the relevant datasets.

\paragraph{How Do Our Proposed Voting Proxies Perform?}

Some NAS works \cite{ZeroCost, PPP} demonstrate that we can achieve an omnipotent architecture performance predictor by integrating multiple proxies and predictors. To capitalize on the strengths of different proxies to improve their Spearman $\rho$ values for the above challenging scenarios, such as
exploring the large search space, we employ a majority voting rule among various proxies for architecture estimation.
Although all mentioned voting proxies rely on SynFlow, there are many differences between them, as shown in Table~\ref{tab: realVSextimated}, \ref{tab: datasetVS}, \ref{tab: batchsizeVS}, \ref{tab: initializationVS}, and \ref{tab: searchspaceVS}.
SPW and SJW that utilize WRCor usually have significantly higher Spearman $\rho$ value, which demonstrates that WRCor is a crucial and effective proxy. Since JacCor performs better than PNorm, SJW outperforms SPW in most scenarios, except when exploring the larger NAS-Bench-101 search space. The reason for the performance reduction of SJW in the larger NAS-Bench-101 search space is attributed to the fact that JacCor \cite{NASWOT} estimates networks using the histograms ($0<(\Sigma_J)_{i,j}<\beta$) of the correlation coefficient matrices. This is a coarse-grained score and can lead to similar architectures receiving the same scores as the search space expands unless the upper bound $\beta$ is appropriately adjusted. SPW is a better proxy when searching in the larger search space.
In summary, SPW and SJW compensate for the shortcomings of our proposed proxies
, which can ensure a more accurate architecture evaluation.

\subsection{Analysis of Architecture Search and Evaluation}

\subsubsection{Search on NAS-Bench-101 Search Space}

To compare our zero-shot NAS algorithms across various proxies and search strategies, we conduct experiments on the NAS-Bench-101 benchmark. Table~\ref{tab: NASBENCH101} reports their validation/test accuracies and search costs on the CIFAR-10 dataset. Compared with X-ZeroCost, X-WRCor promotes the test accuracies by $1.99\sim8.81$ with higher robustness. RL-WRCor and R-WRCor discover the second and third competitive architectures with the test accuracies of 93.36\% and 93.26\%, respectively. X-SJW slightly outperforms X-WRCor. Since X-SJW replaces SNIP \cite{SNIP} with WRCor for voting, it has the $2.10\sim8.86$ higher test accuracies with higher robustness than X-ZeroCost. Notably, RL-SJW discovers the most competitive architectures with an average test accuracy of 93.47\% and a minimal standard deviation of 0.03. Furthermore, X-WRCor and X-SJW discover architectures that are comparable to those discovered by X-ValACC, but X-ValACC requires a $25\sim50$ higher cost. Our proposed WRCor and SJW proxies are effective, efficient, and robust estimation strategies for NAS.
\begin{table}
  \caption{The comparison of zero-shot NAS across different search strategies and training-free proxies on the CIFAR-10 dataset and the NAS-Bench-101 benchmark. We run each algorithm three times where $N=1000$. $^\star$, $^\ddagger$, and $^\dagger$ represent the top-3 algorithms in the last three blocks. X-Y represents the zero-shot NAS algorithm that uses search strategy X and training-free proxy Y.}
  \label{tab: NASBENCH101}
  \centering
  \begin{tabular}{l@{}rll}
      \hline
      &&\multicolumn{2}{c}{CIFAR-10 Acc (\%)}\\
      \cmidrule(r){3-4}
      Algorithm & Cost (s) & Validation & Test \\
      \hline
      R-ValAcc & 142980 & 93.92 $\pm$ 0.11 & 93.42 $\pm$ 0.02 \\
      RL-ValAcc & 157375 & 94.23 $\pm$ 0.33 & 93.72 $\pm$ 0.27 \\
      RE-ValAcc & 152146 & 94.28 $\pm$ 0.24 & 93.78 $\pm$ 0.16 \\
      \hline
      Optimum & - & 95.06 & 94.32 \\
      \hline
      R-WRCor & \textbf{2877} & 93.82 $\pm$ 0.38 & 93.26 $\pm$ 0.29$^\dagger$ \\
      RL-WRCor & 3188 & 93.85 $\pm$ 0.26$^\dagger$ & 93.36 $\pm$ 0.20$^\ddagger$ \\
      RE-WRCor & 3088 & 93.48 $\pm$ 0.08 & 93.10 $\pm$ 0.06 \\
      \hline
      R-ZeroCost & 4677 & 89.00 $\pm$ 2.87 & 88.46 $\pm$ 3.17 \\
      RL-ZeroCost & 5157 & 92.11 $\pm$ 1.00 & 91.37 $\pm$ 1.07 \\
      RE-ZeroCost & 4983 & 85.15 $\pm$ 0.14 & 84.29 $\pm$ 0.36 \\
      \hline
      R-SJW & 5831 & 93.61 $\pm$ 0.13 & 93.10 $\pm$ 0.14 \\
      RL-SJW & 6430 & \textbf{93.99 $\pm$ 0.13}$^\star$ & \textbf{93.47 $\pm$ 0.03}$^\star$ \\
      RE-SJW & 6204 & 93.89 $\pm$ 0.51$^\ddagger$ & 93.15 $\pm$ 0.77 \\
      \hline
  \end{tabular}
\end{table}

\subsubsection{Search on NAS-Bench-201 Search Space}

To further compare the existing zero-shot NAS algorithms with ours, we conduct experiments on the NAS-Bench-201 benchmark. Table~\ref{tab: NASBENCH201} reports their test accuracies and search costs on the CIFAR-10, CIFAR-100, and ImageNet16-120 datasets. X-WRCor outperforms most existing NAS algorithms, except GDAS, TE-NAS, and ZiCo on CIFAR-10 and CIFAR-100 datasets. RE-SJW significantly outperforms most algorithms, except TE-NAS \cite{TENAS} and ZiCo \cite{ZiCo} on the CIFAR-10 dataset, AZ-NAS \cite{AZNAS} on the ImageNet16-120 dataset, and RE-ValACC. RE-SJW exhibits superior performance on the CIFAR-100 and ImageNet16-120 datasets compared to the CIFAR-10 dataset. This can be because our proxies offer a more accurate reflection of real-world performance, especially on datasets with numerous categories. Another reason is that the datasets with fewer categories tend to have samples within a minibatch that are more similar, which can potentially disturb our proxies and affect their performance. In essence, our proposed proxies are more suitable for datasets with a large number of categories. Besides, RE-WRCor has a significant performance reduction compared to R-WRCor and RL-WRCor across the three datasets. This is probably because our algorithm with WRCor can easily obtain inferior local optima. SJW can leverage the strengths of various proxies to improve the solution quality. RE-SJW is a promising zero-shot NAS algorithm in terms of performance and efficiency, achieving competitive architectures with up to 50 times efficiency compared to X-ValAcc and existing NAS algorithms.
\begin{table}
  \caption{The comparison of zero-shot NAS across different search strategies and training-free proxies on the NAS-Bench-201 benchmark, including CIFAR-10, CIFAR-100, and ImageNet16-120 datasets. We run each algorithm three times where $N=1000$. $^\star$, $^\ddagger$, and $^\dagger$ represent the top-3 algorithms in the last three blocks. X-Y represents the zero-shot NAS algorithm that uses search strategy X and training-free proxy Y.}
  \label{tab: NASBENCH201}
  \centering
  \begin{tabular}{@{}l@{}rl@{}l@{}l@{}}
      \hline
      &&\multicolumn{3}{c}{Test Acc (\%)}\\
      \cmidrule(r){3-5}
      Algorithm & Cost (s) & c10 & c100 & im120 \\
      \hline
      ENAS \cite{ENAS} & 13315 & 54.30 $\pm$ 0.00 & 15.61 $\pm$ 0.00 & 16.32 $\pm$ 0.00 \\ 
      RSPS \cite{RSPS} & 7587 & 87.66 $\pm$ 1.69 & 58.33 $\pm$ 4.34 & 31.14 $\pm$ 3.88 \\ 
      DARTS-V1 \cite{DARTS} & 10890 & 54.30 $\pm$ 0.00 & 15.61 $\pm$ 0.00 & 16.32 $\pm$ 0.00 \\ 
      DARTS-V2 \cite{DARTS} & 29902 & 54.30 $\pm$ 0.00 & 15.61 $\pm$ 0.00 & 16.32 $\pm$ 0.00 \\ 
      GDAS \cite{GDAS} & 28926 & 93.51 $\pm$ 0.13 & 70.61 $\pm$ 0.26 & 41.84 $\pm$ 0.90 \\ 
      SETN \cite{SETN} & 31010 & 86.19 $\pm$ 4.63 & 56.87 $\pm$ 7.77 & 31.90 $\pm$ 4.07 \\ 
      \hline
      NASWOT \cite{NASWOT} & - & 91.78 $\pm$ 1.45 & 67.05 $\pm$ 2.89 & 37.07 $\pm$ 6.39 \\ 
      TE-NAS \cite{TENAS} & - & 93.90 $\pm$ 0.47 & 71.24 $\pm$ 0.56 & 42.38 $\pm$ 0.46 \\ 
      ZiCo \cite{ZiCo} & - & 94.00 $\pm$ 0.40 & 71.10 $\pm$ 0.30 & 41.80 $\pm$ 0.30 \\ 
      AZ-NAS \cite{AZNAS} & - & 93.53 $\pm$ 0.15 & 70.75 $\pm$ 0.48 & 45.43 $\pm$ 0.29 \\
      \hline
      R-ValAcc & 105992 & 94.04 $\pm$ 0.18 & 72.51 $\pm$ 0.58 & 45.59 $\pm$ 0.71 \\
      RL-ValAcc & 116656 & 93.85 $\pm$ 0.11 & 71.65 $\pm$ 0.55 & 44.95 $\pm$ 1.21 \\
      RE-ValAcc & 112827 & 94.37 $\pm$ 0.00 & 73.09 $\pm$ 0.00 & 46.33 $\pm$ 0.00 \\
      \hline
      Optimum & - & 94.37 & 73.51 & 47.31 \\
      \hline
      R-WRCor & \textbf{2412} & 91.95 $\pm$ 1.84 & 68.93 $\pm$ 2.30 & 41.80 $\pm$ 4.50 \\
      RL-WRCor & 2666 & 93.11 $\pm$ 0.34$^\ddagger$ & 70.34 $\pm$ 0.54$^\ddagger$ & \textbf{45.42 $\pm$ 0.56}$^\star$ \\
      RE-WRCor & 2569 & 89.37 $\pm$ 0.00 & 65.71 $\pm$ 0.00 & 35.48 $\pm$ 0.00 \\
      \hline
      R-ZeroCost & 5474 & 93.03 $\pm$ 0.44$^\dagger$ & 68.64 $\pm$ 2.09 & 41.31 $\pm$ 2.76 \\
      RL-ZeroCost & 6048 & 91.54 $\pm$ 2.03 & 66.35 $\pm$ 3.85 & 35.27 $\pm$ 6.68 \\
      RE-ZeroCost & 5842 & 92.23 $\pm$ 0.11 & 67.12 $\pm$ 0.26 & 34.48 $\pm$ 4.01 \\
      \hline
      R-SJW & 5825 & 92.04 $\pm$ 1.90 & 68.82 $\pm$ 2.40 & 41.31 $\pm$ 4.12 \\
      RL-SJW & 6572 & 92.96 $\pm$ 0.46 & 70.05 $\pm$ 0.72$^\dagger$ & 44.80 $\pm$ 1.12$^\dagger$ \\
      RE-SJW & 6334 & \textbf{93.76 $\pm$ 0.41}$^\star$ & \textbf{71.57 $\pm$ 1.27}$^\star$ & 45.22 $\pm$ 0.94$^\ddagger$ \\
      \hline
  \end{tabular}
\end{table}

\subsubsection{Search on MobileNetV2 Search Space}

We also conduct experiments of architecture search on the ImageNet-1k dataset in the MobileNetV2 search space. We employ the optimal RE-SJW according to the experimental results on the NAS-Bench-101/201 benchmarks. Table~\ref{tab: MOBILENETV2} reports the test error on the ImageNet-1k dataset, the number of parameters, and the number of FLOPs for our discovered architecture below 600M FLOPs, as well as the search cost. The discovered architecture by RE-SJW is detailed in Table~\ref{tab: DISCOVEREDARCH}. This architecture achieves a competitive result of 22.1\% top-1 test error on the ImageNet-1k dataset in four GPU hours, thus being more efficient than existing NAS algorithms. It outperforms all manually designed architectures and most architectures designed by existing NAS algorithms.
Compared with existing zero-shot NAS algorithms, our discovered architecture achieves 2.4\% less test error than TE-NAS \cite{TENAS} with the same search time and only 0.2\% and 0.5\% more test error than ZiCo \cite{ZiCo} and AZ-NAS \cite{AZNAS}, respectively, while requiring less search time.
We discover a comparable architecture with significantly less search cost. Therefore, RE-SJW is an effective and efficient zero-shot NAS algorithm.
\begin{table}
  \caption{The comparison to the discovered architectures below 600M FLOPs by different NAS algorithms on the ImageNet-1k dataset. Algo. Type refers to the type of the NAS algorithm, including manually designing (M), multi-shot NAS (MS), one-shot NAS (OS), zero-shot NAS (ZS).}
  \label{tab: MOBILENETV2}
  \centering
  \begin{tabular}{@{}l@{}c@{}c@{}cc@{}cc@{}}
    \hline
    &\multicolumn{2}{c}{Test Error (\%)}&Params&FLOPs&Cost&Algo. \\
    \cmidrule(r){2-3}
    Algorithm & Top-1 & Top-5 & (M) & (M) & (GPU Days) & Type \\
    \hline
    MobileNet-V2 \cite{MobileNetV2} & 25.3 & - & 6.9 & 585 & - & M \\
    ShuffleNet-V2 \cite{ShuffleNetV2} & 25.1 & - & 7.4 & 591 & - & M \\
    EfficientNet-B0 \cite{EfficientNet} & 23.7 & 6.8 & 5.3 & 390 & - & M \\
    \hline
    NASNet \cite{NASNet} & 26.0 & 8.4 & 5.3 & 564 & 1800 & MS \\
    AmoebaNet \cite{AmoebaNet} & 24.3 & 7.6 & 6.4 & 570 & 3150 & MS \\
    EcoNAS \cite{EcoNAS} & 25.2 & - & \textbf{4.3} & - & 8 & MS \\
    \hline
    PNAS \cite{PNAS} & 25.8 & 8.1 & 5.1 & 588 & $\sim$225 & FS \\
    SemiNAS \cite{SemiNAS} & 23.5 & 6.8 & 6.3 & 599 & 4 & FS \\
    \hline
    DARTS \cite{DARTS} & 26.7 & 8.7 & 4.7 & 574 & 4 & OS \\
    PDARTS \cite{PDARTS} & 24.1 & 7.3 & 5.4 & 597 & 2.0 & OS \\
    GNAS \cite{GNAS} & 25.7 & 8.1 & 5.3 & - & - & OS \\
    MR-DARTS \cite{MR-DARTS} & 24.0 & 7.0 & - & - & 0.5 & OS \\
    ScarletNAS \cite{ScarletNAS} & 23.1 & 6.6 & 6.7 & 365 & 10 & OS \\
    FBNetV2 \cite{FBNetV2} & 22.8 & - & - & \textbf{325} & 25 & OS \\
    OFA \cite{OFA} & \textbf{20.0} & - & - & 595 & 175 & OS \\
    DESEvo \cite{DESEvo} & 23.5 & 6.5 & 5.5 & - & 0.2 & OS \\
    \hline
    TE-NAS \cite{TENAS} & 24.5 & 7.5 & 5.4 & - & \textbf{0.17} & ZS \\
    ZenNAS \cite{ZenNAS} & 21.7 & - & 12.4 & 453 & 0.5 & ZS \\
    ZiCo \cite{ZiCo} & 21.9 & - & - & 448 & 0.4 & ZS \\
    AZ-NAS \cite{AZNAS} & 21.4 & - & - & 462 & 0.4 & ZS \\
    SED \cite{SED} & 25.9 & - & 6.0 & - & - & ZS \\
    AZ-NAS \cite{LSWAG} & 23.6 & - & 6.2 & - & 0.7 & ZS \\
    LIBRA \cite{LSWAG} & 23.1 & - & 5.7 & - & 0.3 & ZS \\
    \hline
    RE-SJW & 21.9 & \textbf{5.9} & 10.8 & 592 & \textbf{0.17} & ZS \\
    \hline
  \end{tabular}
\end{table}
\begin{table}
  \caption{The discovered architecture on the MobileNetV2 search space by RE-SJW.}
  \label{tab: DISCOVEREDARCH}
  \centering
  \begin{tabular}{ccccccccc}
      \hline
      Stage & Op & E & K & S & L & C & B & R \\
      \hline
      - & Conv & - & 3 & 2 & 1 & 48   & -   & 224 \\
      1 & MB   & 2 & 7 & 2 & 2 & 32   & 40  & 112 \\
      2 & MB   & 2 & 7 & 2 & 3 & 32   & 48  & 56 \\
      3 & MB   & 6 & 7 & 1 & 3 & 64   & 40  & 28 \\
      4 & MB   & 6 & 7 & 2 & 4 & 88   & 40  & 28 \\
      5 & MB   & 6 & 7 & 1 & 6 & 104  & 56  & 14 \\
      6 & MB   & 6 & 5 & 2 & 6 & 152  & 192 & 14 \\
      7 & MB   & 6 & 5 & 1 & 2 & 240  & 136 & 7 \\
      - & Conv & - & 1 & 1 & 1 & 1264 & -   & 7 \\
      \hline
  \end{tabular}
\end{table}

\section{Discussion}

Zero-shot NAS algorithms can discover superior architectures with less search cost, yet their performance diminishes when applied to larger search spaces. Therefore, devising a better zero-shot NAS algorithm is a promising direction for future research. Future works should focus on developing a better training-free proxy for zero-shot NAS.

Besides, following extensive literature reviews and experiments, we also have the same observation as the previous works \cite{ZeroCost, PPP}. There is no single proxy (including our proxies) that can consistently and robustly outperform other existing proxies in all situations. An optimal training-free proxy should be more comprehensive, considering various aspects of neural networks rather than just expressivity and generalizability. Furthermore, our experiments of voting proxies highlight the need for effective dynamic voting mechanisms, which is also an important direction for zero-shot NAS in the future.

The proposal of training-free proxies benefits from the development of interpretability and theories of deep learning. Meanwhile, zero-shot NAS can facilitate the exploration of more advanced architectures, thereby driving the continuous advancement of deep learning. Therefore, it is crucial to pay attention to the interpretability and theories of deep learning in the future.

Although the zero-shot NAS algorithm we proposed has only demonstrated its effectiveness in image recognition, it has the potential to be extended to other downstream tasks in computer vision, and even other architectures and fields.

\section{Conclusion}

We propose a set of training-free proxies, including ACor, RCor, and WRCor, for architecture estimation of zero-shot NAS, which can evaluate the layer-wise expressivity and generalizability of neural architectures by the correlation of responses. Experimental results on proxy evaluation demonstrate that our training-free proxies outperform most existing proxies in terms of efficiency, stability, and generality. Besides, we can achieve further improvement by using our voting proxies, i.e., SJW and SPW. Experimental results on architecture search show that our zero-shot NAS algorithms outperform most existing NAS algorithms for image recognition. Particularly, RE-SJW can achieve competitive results more efficiently on the ImageNet-1k dataset in the MobileNetV2 search space.

\section*{Acknowledgements}

This work was supported by the National Natural Science Foundation of China (Grant No. 62406003) and Science and Technology Innovation Program of Anhui Province (Grant No. 202423k09020020).



\bibliographystyle{elsarticle-num} 
\bibliography{refs}

\end{sloppypar}
\end{document}